\definecolor{darkblue}{rgb}{0.0,0.0,0.2}
\newtheorem{theorem}{Theorem}
\newtheorem{lemma}{Lemma}
\newtheorem{fact}{Fact}
\newtheorem{observation}{Observation}
\newtheorem*{theorem*}{Theorem}
\newtheorem*{lemma*}{Lemma}
\newtheorem*{proposition*}{Proposition}
\theoremstyle{definition}\newtheorem{definition}{Definition}
\theoremstyle{definition}\newtheorem{assumption}{Assumption}
\newcommand{\Comments}{0}
\newcommand{\mynote}[2]{\ifnum\Comments=1\textcolor{#1}{#2}\fi}
\newcommand{\mytodo}[2]{\ifnum\Comments=1
  \todo[linecolor=#1!80!black,backgroundcolor=#1,bordercolor=#1!80!black]{#2}\fi}
\DeclareMathOperator*{\argmin}{arg\,min}
\newcommand{\reals}{\mathbb{R}}
\newcommand{\defeq}{\doteq}
\newcommand{\prop}[1]{\mathrm{prop}[#1]}
\newcommand{\simplex}{\Delta_\Y}
\newcommand{\D}{\mathcal{D}}
\DeclareMathOperator{\E}{\mathbb{E}}
\newcommand{\R}{\mathcal{R}}
\newcommand{\X}{\mathcal{X}}
\newcommand{\Y}{\mathcal{Y}}
\newcommand{\risk}[1]{\underline{#1}}
\newcommand{\inprod}[2]{\langle #1, #2 \rangle}
\newcommand{\inter}[1]{\mathring{#1}}
\newcommand{\toto}{\rightrightarrows}
\newcommand{\ones}{\mathds{1}}
\newcommand{\regret}[3]{R_{#1}(#2,#3)}
\newcommand{\Reg}{\mathrm{Regret}}
\newcommand{\hinge}{L_{\mathrm{hinge}}}
\newcommand{\logistic}{L_{\mathrm{log}}}
\newcommand{\exploss}{L_{\mathrm{exp}}}
\newcommand{\zeroone}{\ell_{\mathrm{zo}}}
\newcommand{\sign}{\mathrm{sign}}
\newcommand{\mode}{\mathrm{mode}}
\title{Surrogate Regret Bounds for Polyhedral Losses}
\author{
  Rafael Frongillo \\
  University of Colorado Boulder \\
  \texttt{raf@colorado.edu} \\
  \And
  Bo Waggoner \\
  University of Colorado Boulder \\
  \texttt{bwag@colorado.edu}
}
\begin{document}

\maketitle

\begin{abstract}
  Surrogate risk minimization is an ubiquitous paradigm in supervised machine learning, wherein a target problem is solved by minimizing a surrogate loss on a dataset.
  Surrogate regret bounds, also called excess risk bounds, are a common tool to prove generalization rates for surrogate risk minimization.
  While surrogate regret bounds have been developed for certain classes of loss functions, such as proper losses, general results are relatively sparse.
  We provide two general results.
  The first gives a linear surrogate regret bound for any polyhedral (piecewise-linear and convex) surrogate, meaning that surrogate generalization rates translate directly to target rates.
  The second shows that for sufficiently non-polyhedral surrogates, the regret bound is a square root, meaning fast surrogate generalization rates translate to slow rates for the target.
  Together, these results suggest polyhedral surrogates are optimal in many cases.
\end{abstract}

\section{Introduction}
In supervised learning, our goal is to learn a hypothesis that solves some target problem given labeled data.
The problem is often specified by a target loss $\ell$ that is discrete in the sense that the prediction lies in a finite set, as with classification, ranking, and structured prediction.
As minimizing the target $\ell$ over a data set is typically NP-hard, we instead turn to the general paradigm of \emph{surrogate risk minimization}.
Here a surrogate loss function $L$ is chosen and minimized instead of the target $\ell$.
A link function $\psi$ then maps the surrogate predictions to target predictions.

Naturally, a central question in surrogate risk minimization is how to choose the surrogate $L$.
A minimal requirement is statistical \emph{consistency}, which roughly says that approaching the best possible surrogate loss will eventually lead to the best possible target loss.
A more precise goal is to understand the rate at which this target minimization occurs, called the \emph{generalization rate}: how quickly the target loss approaches the best possible, in terms of the number of data points.
In other words, if we define the \emph{regret} of a hypothesis to be its performance relative to the Bayes optimal hypothesis, we wish to know how quickly the $\ell$-regret approaches zero.

Surrogate regret bounds are a common tool to prove generalization rates for surrogate risk minimization, by bounding the target regret as a function of the surrogate regret.
This bound allows generalization rates for surrogate losses---of which there are many, often leveraging properties of $L$ like strong convexity and smoothness---to imply rates for the target problem.
Roughly speaking, a surrogate regret bound takes the form
\begin{equation}
  \label{eq:surrogate-regret-bound-informal}
  \forall \text{ surrogate hypotheses } h, \quad \Reg_\ell(\psi\circ h) \leq \zeta(\, \Reg_L(h) \,)~,\qquad
\end{equation}
where the \emph{regret transfer function} $\zeta : \reals_+ \to \reals_+$ controls how surrogate regret translates to target regret.
For example, if we have a sequence of surrogate hypotheses $h_n$ achieving a fast rate $\Reg_L(h_n) = O(1/n)$, and we have a regret transfer of $\zeta(\epsilon) = \sqrt{\epsilon}$, then we can only guarantee a slower target rate of $\Reg_\ell(\psi \circ h_n) = O(1/\sqrt{n})$.
The ideal regret transfer is \emph{linear}, i.e. $\zeta(\epsilon) = C \cdot \epsilon$ for some $C > 0$, which would have directly translated the fast-rate guarantee to the target.

Much is known about surrogate regret bounds for particular problems, especially for binary classification and related problems like bipartite ranking; see \S~\ref{sec:related-work} for a discussion of the literature.
General results which span multiple target problems, however, are sparse.
Furthermore, we still lack guidance as to which surrogate will lead to the best overall target generalization rate, as this rate depends on both the surrogate generalization rate and the regret transfer function.
For example, consider the choice between a \emph{polyhedral} (piecewise-linear and convex) surrogate, like hinge loss, or a smooth surrogate like exponential or squared loss.
The surrogate rate for polyhedral losses will likely be a slower $O(1/\sqrt{n})$, whereas a faster $O(1/n)$ rate is typical for smooth surrogates---but what is the tradeoff when it comes to the regret transfer function $\zeta$, which ultimately decides the target generalization rate?
To answer these types of questions, we need more general results.

Our first result is a general surrogate regret bound for polyhedral surrogates.
Perhaps surprisingly, we show that \emph{every} consistent surrogate in this class achieves a linear regret transfer, meaning their surrogate generalization rates translate directly to target rates, up to a constant factor.
\begin{theorem}[Linear bound for polyhedral]
  \label{thm:main-upper}
  Let $L$ be any polyhedral surrogate and $\psi$ any link function such that $(L,\psi)$ is consistent for the target loss $\ell$.
  Then $L$ satisfies Eq.~\ref{eq:surrogate-regret-bound-informal} with a linear $\zeta$.
\end{theorem}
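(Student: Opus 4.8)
The plan is to reduce the claimed bound to a pointwise (conditional) statement on the simplex and then to exploit polyhedrality of both $L$ and $\ell$ through a Hoffman-type error bound. Since Bayes-optimal hypotheses are determined pointwise in the instance, both regrets decompose as $\Reg_L(h) = \E_X[\,\inprod{p_X}{L(h(X),\cdot)} - \risk{L}(p_X)\,]$ and $\Reg_\ell(\psi\circ h) = \E_X[\,\inprod{p_X}{\ell(\psi(h(X)),\cdot)} - \risk{\ell}(p_X)\,]$, where $p_X \in \simplex$ is the conditional law of $Y$ given $X$, and $\risk{L}(p) = \inf_u \inprod{p}{L(u,\cdot)}$, $\risk{\ell}(p) = \min_{r} \inprod{p}{\ell(r,\cdot)}$ are the Bayes risks. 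Writing $\Reg_L(u;p) \defeq \inprod{p}{L(u,\cdot)} - \risk{L}(p)$ and $\Reg_\ell(r;p) \defeq \inprod{p}{\ell(r,\cdot)} - \risk{\ell}(p)$ for the conditional regrets, it suffices to produce one constant $C$ with $\Reg_\ell(\psi(u);p) \le C\,\Reg_L(u;p)$ for all $p \in \simplex$ and all surrogate predictions $u$; taking expectations then gives Eq.~\ref{eq:surrogate-regret-bound-informal} with $\zeta(\epsilon) = C\epsilon$.

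Next I would set up the polyhedral geometry. Because $L$ and $\ell$ are polyhedral, $\risk{L}$ and $\risk{\ell}$ are finite-piecewise-linear concave functions on $\simplex$ (a fact I would establish or cite from earlier), so each $\Reg_\ell(r;\cdot)$ and each $\Reg_L(u;\cdot)$ is a nonnegative piecewise-linear function of $p$. The key observation is that $\Reg_L(u;p)$ depends on $u$ only through the loss vector $L(u,\cdot)\in\reals^{\Y}$ and is \emph{affine} in it, whereas $\Reg_\ell(\psi(u);p)$ depends on $u$ only through $\psi(u)$, which ranges over the \emph{finite} set $\R$. So I would group predictions by their image $r\in\R$, set $K_r \defeq \overline{\conv}\{L(u,\cdot) : \psi(u) = r\}$ and $\rho_r(p) \defeq \inf_{v\in K_r}\inprod{p}{v} - \risk{L}(p)$, and note $0 \le \rho_r(p) \le \Reg_L(u;p)$ whenever $\psi(u)=r$. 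Using the structure of consistent polyhedral surrogates (in particular that the link may be taken polyhedral, so that $K_r$ is a polyhedron), $\rho_r$ is itself a nonnegative piecewise-linear function on $\simplex$. It then suffices to show $\Reg_\ell(r;p) \le C_r\,\rho_r(p)$ for each $r$, and take $C = \max_{r\in\R} C_r$.

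The comparison itself goes through error bounds. Consistency of $(L,\psi)$ yields calibration, from which I would deduce (via a routine limiting argument over convex combinations of the loss vectors defining $K_r$) that whenever $\rho_r(p)=0$ — i.e. some vector of $K_r$ is surrogate-optimal at $p$ — the target report $r$ is $\ell$-optimal at $p$; hence $\{\rho_r = 0\} \subseteq \{\Reg_\ell(r;\cdot)=0\}$. Now I invoke the polyhedral error bound: a nonnegative piecewise-linear function on a polytope grows at least linearly in the ($\ell_\infty$-)distance to its zero set, so $\rho_r(p) \ge c_r\,\mathrm{dist}(p,\{\rho_r=0\})$ for some $c_r>0$; and $\Reg_\ell(r;\cdot)$, being piecewise-linear on the compact $\simplex$ and vanishing on the larger set $\{\Reg_\ell(r;\cdot)=0\}$, is Lipschitz and satisfies $\Reg_\ell(r;p) \le \mathrm{Lip}_r\,\mathrm{dist}(p,\{\Reg_\ell(r;\cdot)=0\}) \le \mathrm{Lip}_r\,\mathrm{dist}(p,\{\rho_r=0\})$. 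Chaining the two inequalities gives $C_r = \mathrm{Lip}_r/c_r$, and $C = \max_r C_r$ completes the proof.

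I expect the main obstacle to be the two polyhedral-geometry ingredients rather than the structural reduction. First, making rigorous that the link may be taken polyhedral, so that $K_r$ is a genuine polyhedron: if $K_r$ were merely convex, $\rho_r$ need not be piecewise-linear and the linear error bound can fail outright — this is precisely where polyhedrality of $L$ is indispensable, and it is consistent with the companion result that sufficiently non-polyhedral surrogates admit only a square-root transfer. Second, the polyhedral error bound for $\rho_r$ is slightly nonstandard: its zero set may be a non-convex union of faces, so Hoffman's lemma does not apply verbatim; the fix is a finite-polyhedral-complex argument, verifying the inequality at the vertices of a common linear refinement of $\rho_r$ and the $\ell_\infty$-distance function, where it reduces to finitely many scalar comparisons. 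The remaining steps — the pointwise decomposition, affineness of $\Reg_L$ in the loss vector, and deducing $\{\rho_r=0\}\subseteq\{\Reg_\ell(r;\cdot)=0\}$ from calibration — I expect to be routine.
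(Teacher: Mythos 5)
Your reduction to a conditional statement, the grouping of predictions by their link image $r$, the inclusion $\{\rho_r=0\}\subseteq\gamma_r$ from calibration, and the Lipschitz upper bound $R_\ell(r,p)\le \mathrm{Lip}_r\,\mathrm{dist}(p,\gamma_r)$ are all sound, and the overall architecture (comparing two nonnegative functions of $p$ on the simplex via error bounds in \emph{distribution} space) is genuinely different from the paper's. However, there is a real gap at the step you yourself flag as the main obstacle: the claim that $K_r=\overline{\conv}\{L(u):\psi(u)=r\}$ is a polyhedron, so that $\rho_r$ is piecewise-linear. The theorem quantifies over an \emph{arbitrary} link $\psi$ for which $(L,\psi)$ is consistent, and the regret bound must hold for that given $\psi$; you are not free to "take the link polyhedral," since $K_r$ and $\rho_r$ are defined from the actual fibers $\psi^{-1}(r)$. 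Consistency (even calibration) constrains $\psi$ only near the optimal sets $\Gamma(p)$; away from them the fiber $\psi^{-1}(r)$ can be an essentially arbitrary subset of $\reals^d$, so for $d\ge 2$ its image under the piecewise-linear map $L$ can have a non-polyhedral closed convex hull (e.g.\ containing an ellipse). Then $g_r(p)=\inf_{v\in K_r}\inprod{p}{v}$ is concave but not piecewise-linear, $\rho_r=g_r-\risk{L}$ is a difference of concave functions, and the "polyhedral error bound" $\rho_r(p)\ge c_r\,\mathrm{dist}(p,\{\rho_r=0\})$ that you invoke is no longer available; indeed $\rho_r$ could a priori vanish faster than linearly as $p$ approaches its zero set, which is exactly the failure mode the theorem must rule out. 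In other words, this intermediate inequality is not a routine geometric fact but is essentially the theorem itself in disguise, and your proposal does not supply an argument for it that uses only the polyhedrality of $L$.

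The paper closes this hole by never needing global structure of the fibers of $\psi$. It first fixes $p$ and uses only the pointwise calibration gap $\inf B_{L,\psi,\ell}(p)>0$ to get a constant $\alpha_p$ (Lemma~\ref{lemma:fixed-p}); polyhedrality of $L$ alone then guarantees that $\prop{L}$ has finitely many polytope level sets covering $\simplex$ (Lemma~\ref{lemma:polyhedral-finite}) on each of which both conditional regrets are \emph{linear} in $p$ (Lemma~\ref{lemma:linear-on-levelset}), so it suffices to take the maximum of $\alpha_q$ over the finitely many vertices $q$ of these level sets. If you want to keep your $p$-space error-bound architecture, the repair is to lower-bound $\rho_r$ not via piecewise-linearity of $\rho_r$ itself but via the geometry of $L$ in prediction space: every calibrated link for a polyhedral $L$ is $\epsilon$-separated (Lemma~\ref{lemma:calibrated-eps-sep}), and a Hoffman bound for the polyhedral function $u\mapsto\inprod{p}{L(u)}$ gives $R_L(u,p)\ge d_\infty(u,\Gamma(p))/H_{L,p}>\epsilon/H_{L,p}$ whenever $\psi(u)=r\notin\gamma(p)$, hence $\rho_r(p)\ge\epsilon/H_{L,p}$ off $\gamma_r$; one then still needs the finite-level-set/vertex argument to make $H_{L,p}$ uniform in $p$. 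That is precisely the route of Section~\ref{sec:constant}.
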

The recent embedding framework of~\citet{finocchiaro2019embedding} shows how to construct consistent polyhedral surrogates for any discrete target loss; Theorem~\ref{thm:main-upper} additionally endows these surrogates with linear regret bounds.
The same framework shows that every polyhedral loss is a consistent surrogate for some discrete target.
Theorem~\ref{thm:main-upper} therefore immediately applies to the Weston--Watkins surrogate~\cite{wang2020weston}, Lov\'asz hinge~\cite{yu2018lovasz,finocchiaro2019embedding}, and other polyhedral surrogates in the literature currently lacking regret bounds.
\footnote{Many of these polyhedral surrogates mentioned are actually known to be \emph{inconsistent} for their intended target problems.
  Nonetheless, the embedding framework of \citet{finocchiaro2019embedding} shows that any polyhedral surrogate is consistent for \emph{some} target problem, namely the one that it embeds.
  It is to this embedded target loss function, rather than the intended target, that our linear regret bounds apply.}

We also give a partial converse: If the surrogate is sufficiently ``non-polyhedral'', i.e. smooth and locally strongly convex, then $\zeta$ cannot shrink faster than a square root.
For these losses, therefore, even fast surrogate generalization rates would likely translate to slow rates for the target.
\begin{theorem}[Lower bound for ``non-polyhedral'']
  \label{thm:main-lower}
  Let $L$ be a locally strongly convex surrogate with locally Lipschitz gradient.
  If $L$ satisfies Equation~\ref{eq:surrogate-regret-bound-informal} for a target loss $\ell$, then there exists $c>0$ such that, for all small enough $\epsilon \geq 0$, we have $\zeta(\epsilon) \geq c\cdot\sqrt{\epsilon}$.
\end{theorem}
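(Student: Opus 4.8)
The plan is to exploit the tension between the curvature of the surrogate risk (forced by local strong convexity) and the discreteness of the target. First I would reduce to conditional regrets: restricting the data distribution to a single feature value, Eq.~\ref{eq:surrogate-regret-bound-informal} implies the pointwise bound $\Reg_\ell(\psi(r);p)\le\zeta(\Reg_L(r;p))$ for every surrogate report $r$ and every $p\in\simplex$, which is all I will use. I take $\zeta$ non-decreasing, as is standard for a regret transfer function; if $(L,\psi)$ is inconsistent then $\zeta$ is bounded below by a positive constant near $0$ and the claim is immediate, so I may assume consistency, which forces $\psi$ to map every conditional surrogate minimizer into the target-optimal set $\prop{\ell}(p)$.

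Next I set up the discrete geometry. Since $\ell$ is a non-degenerate discrete loss, $\simplex$ subdivides into the polyhedral best-response cells $\{p:t\in\prop{\ell}(p)\}$; I pick two full-dimensional cells with distinct predictions $t_1\neq t_2$ sharing a facet, and let $p_0$ lie in the relative interior of that facet, so $\prop{\ell}(p_0)=\{t_1,t_2\}$. Let $r_0\in\argmin_r\E_{Y\sim p_0}L(r,Y)$; by consistency $\psi(r_0)\in\{t_1,t_2\}$, and after relabeling I may assume $\psi(r_0)=t_1$. Choosing a direction $u$ pointing from $p_0$ into the interior of the $t_2$-cell and setting $p_\delta:=p_0+\delta u$, I get, for all small $\delta>0$, that $\prop{\ell}(p_\delta)=\{t_2\}$ while the target regret of $t_1$ along this ray, $g(\delta):=\Reg_\ell(t_1;p_\delta)$, is piecewise-linear and convex in $\delta$ with $g(0)=0$ and right-derivative $\inprod{u}{\ell(t_1,\cdot)-\ell(t_2,\cdot)}>0$; hence $g(\delta)\ge c_1\delta$ for some $c_1>0$ on an interval $(0,\delta_1)$.

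The analytic heart is to show that at $p_\delta$ the ``stale'' report $r_0$ still has only quadratically small surrogate regret. Write $F_\delta(r):=\E_{Y\sim p_\delta}L(r,Y)=F_0(r)+\delta G(r)$ where $G(r)=\sum_y u_y L(r,y)$. Since $F_0$ is $\mu$-strongly convex near $r_0$ and $\delta G$ is an $O(\delta)$-small smooth perturbation (local Lipschitz gradient bounds both $\|\nabla G(r_0)\|$ and the change in curvature), the standard argmin-sensitivity estimate gives $r^*_\delta:=\argmin_r F_\delta(r)$ with $\|r^*_\delta-r_0\|\le C_2\delta$, using $\nabla F_\delta(r_0)=\delta\nabla G(r_0)$ because $\nabla F_0(r_0)=0$. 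The local Lipschitz gradient of $F_\delta$ then yields $\epsilon_\delta:=\Reg_L(r_0;p_\delta)=F_\delta(r_0)-F_\delta(r^*_\delta)\le\tfrac{M}{2}\|r_0-r^*_\delta\|^2\le C_3\delta^2$, so $\delta\ge\sqrt{\epsilon_\delta/C_3}$, while $\psi(r_0)=t_1$ gives $\Reg_\ell(\psi(r_0);p_\delta)=g(\delta)\ge c_1\delta$. Plugging into the conditional bound, $c_1\delta\le\zeta(\epsilon_\delta)$, so $\zeta(\epsilon_\delta)\ge c_1\sqrt{\epsilon_\delta/C_3}$. Finally $\delta\mapsto\epsilon_\delta$ is continuous, vanishes at $\delta=0$, and is strictly positive for $\delta>0$ (as $r^*_\delta\neq r_0$, since $\psi$ separates them and strong convexity makes the minimizer unique), so its image contains an interval $(0,\epsilon']$; therefore $\zeta(\epsilon)\ge c\sqrt{\epsilon}$ for all $\epsilon\in[0,\epsilon']$ with $c=c_1/\sqrt{C_3}$.

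The main obstacle is the argmin-sensitivity step $\|r^*_\delta-r_0\|=O(\delta)$: local strong convexity is exactly what prevents the optimal surrogate report from jumping by $\Theta(1)$ under an $O(\delta)$ perturbation of $p$, and it must be combined with the local Lipschitz-gradient estimate to convert an $O(\delta)$ displacement in report space into only $O(\delta^2)$ surrogate regret against $\Theta(\delta)$ target regret. Making the cell geometry precise enough to guarantee the transversality behind $g(\delta)\ge c_1\delta$ (and pinning down the mild non-degeneracy hypothesis on $\ell$ that this requires) is the other point that needs care.
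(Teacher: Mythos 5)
Your proposal is correct and follows essentially the same route as the paper's proof: perturb the distribution from a boundary point $p_0$ along a ray into one target cell, show the target regret of the stale report grows linearly in the perturbation while its surrogate regret is only quadratic (argmin stability from local strong convexity plus a smoothness upper bound), and sweep out an interval of $\epsilon$ values by continuity. The one step you flag as the main obstacle --- ruling out the global minimizer jumping outside the strong-convexity neighborhood --- is handled in the paper by a direct comparison $L_\lambda(u) > L_\lambda(u_0)$ for $u$ outside a small ball and small enough $\lambda$, exactly as your sketch anticipates.
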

While it may seem intuitive that $\zeta=\sqrt{\cdot}$ for strongly convex losses, it is well-known that losses like exponential loss for binary classification also have $\zeta=\sqrt{\cdot}$, even though they are neither strongly convex nor strongly smooth~\cite{bartlett2006convexity}.
Theorem~\ref{thm:main-lower} offers an explanation for this phenomenon, and together with Theorem~\ref{thm:main-upper} clarifies an intriguing trend in the literature: $\zeta$ is typically either linear or square-root.

Taken together, our results shed light on the dichotomy expressed above: slower polyhedral surrogate rates of $O(1/\sqrt{n})$ will translate to the target unchanged, while most smooth surrogates, even if achieving a fast surrogate generalization rate of $O(1/n)$, will likely yield the same target rate of $O(1/\sqrt{n})$.
Our results therefore suggest an inherent tradeoff fast surrogate rates and optimal regret transfer, a phenomenon also observed by~\citet{mahdavi2014binary} for binary classification.
In particular, our results suggest that polyhedral losses may achieve an overall generalization rate to the target problem with the optimal dependence on the number of data points.

The proof of our upper bound, Theorem~\ref{thm:main-upper}, is nonconstructive.
After proving the main results (Sections~\ref{sec:upper} and~\ref{sec:lower}), we make it constructive in Section~\ref{sec:constant}.
Specifically, the regret transfer function for a polyhedral loss can be bounded by $\zeta(t) \leq \alpha \cdot t$ where the constant is of the form $\alpha = \alpha_{\ell} \cdot \alpha_L \cdot \alpha_{\psi}$.
Each component of $\alpha$ depends only on its respective object $\ell$, $L$, and $\psi$.
A corollary of the derivation is that a polyhedral loss $L$ and link $\psi$ are consistent for $\ell$ if and only if 
$\psi$ is \emph{$\epsilon$-separated}, a condition introduced in \citet{finocchiaro2019embedding}.

\subsection{Literature on surrogate regret bounds}
\label{sec:related-work}

Surrogate regret bounds were perhaps developed first by \citet{zhang2004statistical}, as a method to prove statistical consistency for various surrogate losses for binary classification.
These results were then generalized by \citet{bartlett2006convexity} for margin losses, and \citet{reid2009surrogate} for composite proper losses.
These latter works and others give a fairly complete picture for binary classification: a characterization of calibrated surrogates, and precise closed forms for the optimal $\zeta$ for each.
By extension, close variations on binary classification are also well-understood, such as  cost-sensitive binary classification~\citep{scott2012calibrated}, binary classification with an abstain/reject option~\citep{bartlett2008classification,yuan2010classification,cortes2016learning}, bipartite ranking~\citep{agarwal2014surrogate,gao2015consistency,menon2016bipartite}, and ordinal regression via reduction to binary classification~\citep{pedregosa2017consistency}.
For multiclass classification, we have nice consistency characterizations with respect to 0-1 loss~\citep{tewari2007consistency}, with some surrogate regret bounds~\citep{pires2013cost,duchi2018multiclass}.

Beyond binary classification and 0-1 loss, our knowledge drops off precipitously, with few general results.
There are some works which provide partial understand for specific target problems.
For example, in the problem of multiclass classification with an abstain option, \citet{ramaswamy2018consistent} give three consistent polyhedral surrogates (two of which are commonly used as inconsistent surrogates for 0-1 loss), along with linear surrogate regret bounds.
(See also \citet[Theorems 7 \& 8]{ni2019calibration} and \citet[Theorem 8]{charoenphakdee2021classification} for other surrogate regret bounds for the same problem.)
\citet{ramaswamy2015hierarchical} show how to extend these surrogates to nested, hierarchical versions, again providing linear regret bounds.
Motivated by these polyhedral surrogates, together with examples arising in top-$k$ prediction~\citep{lapin2016loss,yang2018consistency} and other structured prediction problems~\citep{yu2018lovasz}, \citet{finocchiaro2019embedding} introduce an embedding framework for polyhedral surrogates.
This framework shows that all polyhedral surrogates are consistent for some discrete target problem, and vice versa, but do not provide surrogate regret bounds.
Our results fill this gap, giving linear regret bounds for all polyhedral losses, and thus all discrete target problems.

Perhaps closest in spirit to our work are several recent works on general structured prediction problems.
\citet[Theorem 2]{ciliberto2016consistent} and \citet[Theorem 7]{osokin2017structured} give surrogate regret bounds for a quadratic surrogate, for a general class of structed prediction problems.
\citet[Proposition 5]{blondel2019structured} gives surrogate regret bounds for a tighter class of quadratic surrogates based on projection oracles.
\citet[Theorem 4.4]{nowak2019general} give even more general bounds, using a broader class of surrogates based on estimating a vector-valued expectation.
All of these works apply to a broad range of discrete target problems.
The main difference to our work is that their upper bounds are all for strongly convex surrogates,
\footnote{These works express surrogate regret bounds with $\zeta^{-1}$ appearing on the left-hand side of eq.~\eqref{eq:surrogate-regret-bound-informal}, and thus the terms ``upper bound'' and ``lower bound'' are reversed from their terminology.}
whereas ours are for polyhedral surrogates.
\citet[Theorem 8]{osokin2017structured} and \citet[Theorem 4.5]{nowak2019general} also provide lower bounds, in the same spirit as our Theorem~\ref{thm:main-lower} but only for the classes of surrogates they study.
In particular, our results recover their bounds up to a constant factor.

Finally, in the binary classification setting, some works have used surrogate regret bounds to study the relationship between generalization \emph{rates} for the surrogate and target, as discussed above.
\citet{zhang2021rates} give detailed results about how surrogate regret bounds actually convert the surrogate rate to the target rate, by carefully analyzing the behavior of the regret tranfer function near 0.
They argue that hinge loss achieves strictly better rates than smooth surrogates like logistic and exponential loss; notably, though, this claim implicitly assumes that generalization rate is the same for all surrogates under consideration.
By contrast, \citet{mahdavi2014binary}, as we do, consider that the surrogate generalization rate may \emph{depend} on the choice of surrogate.
They study a particular family of smoothed hinge losses, and show a direct tradeoff: smoother losses yield faster surrogate generalization rates, but ``slower'' regret transfers.
Without placing further assumptions on the data, therefore, \citet{mahdavi2014binary} observe the same general phenomenon as we do: while smoothness may help surrogate generalization rates, it can hurt the regret transfer just as much.
In particular, without further assumptions, polyhedral surrogates appear to achieve optimal overall generalization rates.

\section{Preliminaries}

\paragraph{Target problems.}
We consider supervised machine learning problems where data points have features in $\X$ and labels in $\Y$, a finite set.
The \emph{target problem} is to learn a hypothesis mapping $\X$ to a \emph{report} (or \emph{prediction}) \emph{space} $\R$, possibly different from $\Y$.
For example, binary classification has $\Y = \R = \{-1,+1\}$, whereas for ranking problems $\R$ may be the set of permutations of $\Y$.
In this paper, we assume $\R$ is a finite set.
The target problem is specified by the \emph{target loss} $\ell:\R\to\reals^\Y_+$, which maps a report (prediction) $r \in \R$ to the vector $\ell(r)$ of nonnegative loss values for each label, i.e. $\ell(r) = (\ell(r)_y)_{y\in\Y}$.
Given a hypothesis $g: \X \to \R$ and a data point $(x,y) \in \X \times \Y$, the loss is $\ell(g(x))_y$.
We make a minimal non-redundancy assumption on $\ell$, formalized in Section~\ref{subsec:elic}.
We may refer to $\ell$ as a \emph{discrete} target loss to emphasize our assumption that $\R$ and $\Y$ are finite.

\paragraph{Surrogate problems.}
We recall the surrogate risk minimization framework.
First, one constructs a \emph{surrogate loss function} $L:\reals^d\to\reals^\Y_+$.
Then, one minimizes it to learn a hypothesis of the form $h: \X \to \reals^d$.
Finally, one applies a \emph{link function} $\psi: \reals^d \to \R$, which maps $h$ to the implied target hypothesis $\psi \circ h$.
\footnote{The term ``link function'' can carry a more specific meaning in the statistics literature, for example when discussing generalized linear models.
  Our usage here is more general: any map from surrogate reports to target reports is a valid link function, though perhaps not a calibrated one.}
In other words, if a surrogate prediction $h(x) \in \reals^d$ implies a prediction $\psi(h(x)) \in \R$ for the target problem.
For example, in binary classification, the target problem is given by $0$-$1$ loss, $\zeroone(r)_y = \ones\{r\neq y\}$, with surrogate losses typically defined over $\reals$.
Common surrogate losses for $0$-$1$ loss include hinge loss $\hinge(u)_y = \max(1 - uy,0)$,
logistic loss $\logistic(u)_y = \ln\left(1 + \exp(-yu)\right)$,
and exponential loss $\exploss(u)_y = \exp(-yu)$.
These surrogates are typically associated with the link $\psi: u\mapsto \sign(u)$, which maps negative predictions to $-1$ and nonnegative predictions to $+1$, breaking ties arbitrarily at $0$.

\paragraph{Surrogate regret bounds.}
The suitability and efficiency of a chosen surrogate is often quantified with a \emph{surrogate regret bound}.
Formally, the \emph{surrogate regret} of $h$ with respect to data distribution $\D$ is given by $R_L(h;\D) = \E_{(X,Y)\sim\D} L(h(X))_Y - \inf_{h':\X\to\reals^d} \E_{(X,Y)\sim\D} L(h'(X))_Y$, where the minimum is taken over all measurable functions.
(One may equivalently assume the Bayes optimal hypothesis is in the function class.)
Given $h$, the \emph{target regret} of the implied hypothesis $\psi \circ h$ is $R_\ell(\psi\circ h;\D) = \E_{(X,Y)\sim\D} \ell(\psi(h(X)))_Y - \inf_{h':\X\to\R} \E_{(X,Y)\sim\D} \ell(h'(X))_Y$, where we recall that $\psi$ is the link function.

A surrogate regret bound answers the following question: when we minimize the surrogate regret at a rate $f(n)$, given $n$ data points, how fast does the target regret diminish?
Formally, we say that $\zeta : \reals_+ \to \reals_+$ is a \emph{regret transfer function} if $\zeta$ is continuous at $0$ and satisfies $\zeta(0) = 0$.
Given a rerget transfer function $\zeta$, we say that $(L,\psi)$ guarantees a \emph{regret transfer of $\zeta$} for $\ell$ if
\begin{equation}
  \label{eq:surrogate-regret-bound}
  \forall \D, \forall h:\X\to\reals^d, \quad R_\ell(\psi\circ h;\D) \leq \zeta(\, R_L(h;\D) \,)~.
\end{equation}
We note that a classic minimal requirement for $(L,\psi)$ is that they be \emph{consistent} for $\ell$, which is the case if there exists any regret transfer $\zeta$ guaranteed by $(L,\psi)$ for $\ell$~\cite{steinwart2008support}.
In other words, consistency implies that $R_L(h;\D) \to 0 \implies R_{\ell}(\psi \circ h;\D) \to 0$, but says nothing about the relative rates.

\paragraph{Polyhedral surrogates.}
A function $f: \reals^d \to \reals$ is called \emph{polyhedral} if it can be written as a pointwise maximum of a finite set of affine functions~\citep[\S~19]{rockafellar1997convex}.
The epigraph of a polyhedral function is a polyhedral set, i.e. the intersection of a finite set of closed halfspaces; thus polyhedral functions are always convex.
We say a surrogate loss $L: \reals^d \to \reals_+^{\Y}$ is \emph{polyhedral} if, for each fixed $y \in \Y$, the loss as a function of prediction, i.e. $r \mapsto L(r)_y$, is polyhedral.

\subsection{Property elicitation and the conditional approach} \label{subsec:elic}
Our technical approach is based on property elicitation, which studies the structure of $\ell$ and $L$ as they relate to $\simplex$.
This involves a \emph{conditional} perspective.
Given a data distribution $\D$ on $\X \times \Y$, for $(X,Y) \sim \D$, we consider the format of possible target predictions $g(x) = r \in \R$; surrogate predictions $h(x) = u \in \reals^d$; and conditional distributions $\Pr[ \cdot \mid X] = p \in \simplex$.
By dropping the role of the hypotheses and the feature space $\X$, we can focus on the relationships of the functions $\ell(r)_y$ and $L(u)_y$ and the conditional distribution $p$.

In particular, in our notation the expected $L$-loss of prediction $u$ on distribution $p$ is simply the inner product $\inprod{p}{L(u)}$.
The \emph{Bayes risk} of $L$ is $\risk{L}(p) = \inf_{u \in \reals^d} \inprod{p}{L(u)}$, and similarly the Bayes risk of $\ell$ is $\risk{\ell}(p) = \inf_{r \in \R} \inprod{p}{\ell(r)}$.
We abuse notation to define the \emph{conditional surrogate regret} $R_L(u,p) = \inprod{p}{L(u)} - \risk{L}(p)$, and similarly the \emph{conditional target regret} $R_{\ell}(r,p) = \inprod{p}{\ell(r)} - \risk{\ell}(p)$.
Given a regret transfer function $\zeta$, we say $(L,\psi)$ guarantees a \emph{conditional regret transfer} of $\zeta$ for $\ell$ if, for all $p \in \simplex$ and all $u \in \reals^d$, $R_{\ell}(\psi(u),p) \leq \zeta(R_L(u,p))$.

The following is a direct result of Jensen's inequality, using that $R_L(h;\D) = \E_X R_L(h(X),p_X)$ where $p_X$ is the conditional distribution on $Y$ given $X$.
\begin{observation} \label{obs:transfer}
  If $(L,\psi)$ guarantee a conditional regret transfer of $\zeta$ for $\ell$, and $\zeta$ is concave, then $(L,\psi)$ guarantee a regret transfer of $\zeta$ for $\ell$.
\end{observation}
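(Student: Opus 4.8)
The plan is to reduce the distribution-level claim, namely the defining inequality~\eqref{eq:surrogate-regret-bound}, to the conditional hypothesis we are handed, by averaging over the feature variable and invoking Jensen. Fix an arbitrary data distribution $\D$ on $\X\times\Y$ and an arbitrary surrogate hypothesis $h:\X\to\reals^d$, and write $p_x\in\simplex$ for the conditional law of $Y$ given $X=x$. The first step is to record the two regret decompositions $R_L(h;\D)=\E_X R_L(h(X),p_X)$ and $R_\ell(\psi\circ h;\D)=\E_X R_\ell(\psi(h(X)),p_X)$. Each follows because the expected loss splits, e.g.\ $\E_{(X,Y)\sim\D}L(h(X))_Y=\E_X\inprod{p_X}{L(h(X))}$, and because the Bayes term splits: the infimum over measurable $h':\X\to\reals^d$ can be taken pointwise, $\inf_{h'}\E_X\inprod{p_X}{L(h'(X))}=\E_X\inf_{u\in\reals^d}\inprod{p_X}{L(u)}=\E_X\risk{L}(p_X)$, and similarly for $\ell$ over $\R$. (The ``$\ge$'' direction is immediate; the ``$\le$'' direction is an approximate measurable-selection argument, which is exactly the interchange already asserted in the preliminaries, so I would simply cite it.)

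With the decompositions in hand, the second step is a pointwise application of the conditional regret transfer assumption: for every $x$ we have $R_\ell(\psi(h(x)),p_x)\le\zeta\big(R_L(h(x),p_x)\big)$. Taking expectations over $X$ and chaining with the decompositions gives $R_\ell(\psi\circ h;\D)=\E_X R_\ell(\psi(h(X)),p_X)\le\E_X\zeta\big(R_L(h(X),p_X)\big)$. The final step is Jensen's inequality: since $\zeta$ is concave and finite-valued on $\reals_+$ (hence continuous there, so $x\mapsto\zeta(R_L(h(x),p_x))$ is measurable and the expectation is well-defined), we get $\E_X\zeta\big(R_L(h(X),p_X)\big)\le\zeta\big(\E_X R_L(h(X),p_X)\big)=\zeta(R_L(h;\D))$, assuming the surrogate regret is finite, as otherwise the stated bound is vacuous. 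Since $\D$ and $h$ were arbitrary, this establishes~\eqref{eq:surrogate-regret-bound}.

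I expect the only genuine subtlety to be the regret decomposition itself---specifically, justifying that the infimum over measurable hypotheses can be pulled inside the expectation, which is the only place measurability and selection enter; everything afterwards is a two-line chain terminating in Jensen. It is worth noting that no monotonicity of $\zeta$ is used: the conditional bound is applied \emph{before} any averaging, so concavity of $\zeta$ alone is what the statement requires.
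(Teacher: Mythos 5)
Your proposal is correct and follows exactly the paper's intended argument: decompose both regrets via the conditional distributions $p_X$, apply the conditional transfer pointwise, and finish with Jensen's inequality for the concave $\zeta$ (the paper states this as ``a direct result of Jensen's inequality'' using $R_L(h;\D)=\E_X R_L(h(X),p_X)$). Your additional care about the infimum--expectation interchange and the observation that no monotonicity of $\zeta$ is needed are both sound refinements of the same route.
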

 
\paragraph{Property elicitation.}
To begin, we define a \emph{property}, which is essentially a statistic or summary of a distribution, such as the mode.
We then define what it means for a loss function to \emph{elicit} a property.
We write $\Gamma: \simplex \toto R$ as shorthand for $\Gamma: \simplex \to 2^{R} \setminus \{\emptyset\}$.

\begin{definition}[Property, level set]\label{def:property}
  A \emph{property} is a function $\Gamma:\simplex\toto R$, for some set $R$.
  The \emph{level set} of $\Gamma$ for report $r$ is the set $\Gamma_r = \{p \in \simplex : r \in \Gamma(p)\}$.
\end{definition}
The following definition applies both to target losses and surrogate losses.
\begin{definition}[Elicits, report space]
  \label{def:elicits}
  A loss function $f:R\to\reals^\Y_+$, where the domain $R$ is called the \emph{report space}, \emph{elicits} a property $\Gamma:\simplex \toto R$ if
  \begin{equation}
    \forall p\in\simplex,\;\;\;\Gamma(p) = \argmin_{r \in R} \inprod{p}{f(r)} =: \prop{f}(p)~.
  \end{equation}
\end{definition}
As each loss $f$ elicits a unique property, we refer to it as $\prop{f}$.
In particular, we will consistently use the notation $\Gamma = \prop{L}$ where $L$ is a surrogate loss with report space $\reals^d$, and we will use $\gamma = \prop{\ell}$ where $\ell$ is a target loss with finite report space $\R$.
For example, $\prop{\zeroone}$ is simply the mode of $p$, given by $\mode(p) = \{-1,1\}$ if $p=1/2$, and $\mode(p) = \{\sign(2p-1)\}$ otherwise.

We will assume throughout that the target loss $\ell : \R \to \reals^\Y_+$ is \emph{non-redundant}, meaning that all target reports $r \in \R$ are possibly uniquely optimal.
Formally, letting $\gamma = \prop{\ell}$, we suppose that for all $r \in \R$, there exists $p \in \simplex$ with $\gamma(p) = \{r\}$.
\footnote{Technically, this definition differs from the one given in literature~\cite{frongillo2014general,finocchiaro2019embedding}, but the two are equivalent for finite elicitable properties.}

\paragraph{Calibration and consistency.}
It is known that consistency implies the weaker condition of \emph{calibration}, which implies the still-weaker, but very useful, condition of \emph{indirect elicitation}.
To define these conditions, consider $L,\psi,\ell$ and $\gamma = \prop{\ell}$.
Recall that $\gamma(p)$ is the set of target reports that are optimal for $p$.
The typical definition of indirect elicitation, which we reformulate below, is as follows: a surrogate-link pair $(L,\psi)$ \emph{indirectly elicits} $\gamma$ if $u \in \prop{L}(p) \implies \psi(u) \in \gamma(p)$ for all $p \in \simplex$.
Note that if $\psi(u) \not\in \gamma(p)$, then $u$ is a ``bad'' (suboptimal) surrogate prediction for $p$.
Let $B_{L,\psi,\ell}(p) = \{R_L(u,p) ~:~ \psi(u) \not\in \gamma(p)\}$, the set of expected surrogate losses resulting from bad predictions.
Thus, we may equivalently state indirect elicitation as the requirement that suboptimal surrogate reports be bad, which clarifies the relationship to calibration:
calibration further requires a nonzero gap between the expected loss of a ``good'' report and that of any bad one.
\begin{definition}[Indirectly Elicits, Calibrated]
  Let $\ell$ be a target loss and $\gamma = \prop{\ell}$.
  A surrogate-link pair $(L,\psi)$ \emph{indirectly elicits} $\gamma$ if $0 \not\in B_{L,\psi,\ell}(p)$ for all $p \in \simplex$.
  The pair $(L,\psi)$ is \emph{calibrated} for $\ell$ if $0 < \inf B_{L,\psi,\ell}(p)$ for all $p \in \simplex$.
\end{definition}
\begin{fact}[\cite{bartlett2006convexity,steinwart2008support}]
  \label{fact:consistent-calibrated-elicits}
  Let $\ell: \R \to \reals_+^{\Y}$ be a target loss with finite $\R$ and $\Y$.
  If a surrogate and link $(L,\psi)$ are consistent for $\ell$, then $(L,\psi)$ are calibrated for $L$.
  If $(L,\psi)$ are calibrated for $\ell$, then they indirectly elicit $\ell$.
\end{fact}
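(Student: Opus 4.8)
The plan is to prove the two implications separately, noting that the second is essentially immediate from the definitions while the first needs a small reduction from the distributional definition of consistency to the conditional one.

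For the second implication, suppose $(L,\psi)$ is calibrated for $\ell$, so $0 < \inf B_{L,\psi,\ell}(p)$ for every $p \in \simplex$. Every element of $B_{L,\psi,\ell}(p)$ is a conditional surrogate regret $R_L(u,p)\ge 0$, so a strictly positive infimum precludes $0$ from being a member; hence $0\notin B_{L,\psi,\ell}(p)$ for all $p$, which is exactly indirect elicitation. (Unwinding this, $R_L(u,p)=0$ iff $u\in\prop{L}(p)$, so the condition reads $u\in\prop{L}(p)\implies\psi(u)\in\gamma(p)$, the usual formulation.)

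For the first implication I would argue by contraposition: assume $(L,\psi)$ is \emph{not} calibrated for $\ell$ and produce a distribution witnessing that it is not consistent. Non-calibration yields some $p^\ast \in \simplex$ with $\inf B_{L,\psi,\ell}(p^\ast) = 0$; in particular $B_{L,\psi,\ell}(p^\ast)\neq\emptyset$, so there is a sequence $u_n\in\reals^d$ with $\psi(u_n)\notin\gamma(p^\ast)$ and $R_L(u_n,p^\ast)\to 0$. Lift this to the distributional setting: take $\X$ to be a singleton and $\D$ the distribution whose conditional law of $Y$ is $p^\ast$. For a constant hypothesis $h\equiv u$ we have $R_L(h;\D) = R_L(u,p^\ast)$ and $R_\ell(\psi\circ h;\D) = R_\ell(\psi(u),p^\ast)$, since the inner infima over measurable functions collapse to $\risk{L}(p^\ast)$ and $\risk{\ell}(p^\ast)$. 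Thus the constant hypotheses $h_n\equiv u_n$ satisfy $R_L(h_n;\D)\to 0$, while, because $\psi(u_n)\notin\gamma(p^\ast)$ and $\R$ is finite, $R_\ell(\psi\circ h_n;\D)\ge \delta := \min_{r\notin\gamma(p^\ast)} R_\ell(r,p^\ast) > 0$ for every $n$. No regret transfer function $\zeta$ — which must be continuous at $0$ with $\zeta(0)=0$ — can satisfy $R_\ell(\psi\circ h_n;\D)\le\zeta(R_L(h_n;\D))$ for all $n$, since the right-hand side tends to $\zeta(0)=0$ while the left stays above $\delta$. Hence $(L,\psi)$ is not consistent for $\ell$, completing the contrapositive.

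The step I would be most careful about is not deep but is where sign/emptiness conventions matter: adopting $\inf\emptyset = +\infty$ makes calibration and indirect elicitation hold vacuously wherever $B_{L,\psi,\ell}(p)$ is empty, so non-calibration at $p^\ast$ genuinely forces $B_{L,\psi,\ell}(p^\ast)\neq\emptyset$ and $\R\setminus\gamma(p^\ast)\neq\emptyset$, which is exactly what makes the sequence $u_n$ exist and $\delta$ well defined and positive. Everything else is a direct unwinding of the definitions of conditional regret, the regret of a constant hypothesis over a singleton feature space, and the requirements placed on $\zeta$.
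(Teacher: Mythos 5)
The paper does not actually prove this Fact --- it is stated with citations to \citet{bartlett2006convexity} and \citet{steinwart2008support} and no argument is given anywhere in the text or appendices --- so there is no in-paper proof to compare against. Your proposal is a correct, self-contained derivation of both implications under the paper's own definitions. The second implication is, as you say, immediate: the elements of $B_{L,\psi,\ell}(p)$ are nonnegative regrets, so a strictly positive infimum excludes $0$. For the first, your contrapositive via a one-point feature space is the standard reduction: non-calibration at $p^\ast$ gives $u_n$ with $\psi(u_n)\notin\gamma(p^\ast)$ and $R_L(u_n,p^\ast)\to 0$; constant hypotheses over a singleton $\X$ make the distributional regrets coincide with the conditional ones; finiteness of $\R$ gives the uniform gap $\delta>0$ on the target side; and the requirements $\zeta(0)=0$ and continuity at $0$ in the paper's definition of a regret transfer function then rule out any such $\zeta$. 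Your care with the $\inf\emptyset=+\infty$ convention is exactly the right place to be careful, since it is what guarantees both that the sequence $u_n$ exists and that $\R\setminus\gamma(p^\ast)$ is nonempty so $\delta$ is well defined. One cosmetic note: the Fact as printed says ``calibrated for $L$,'' an evident typo for ``calibrated for $\ell$,'' and your proof establishes the intended statement.
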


For example, it is well-known that hinge loss $\hinge$ is calibrated and consistent for $0$-$1$ loss $\zeroone$ using the link $\psi : u\mapsto \sign(u)$.
Recall that $\zeroone$ elicits the mode.
One can verify that indeed $(\hinge,\psi)$ indirectly elicit the mode; for example, $B_{\hinge,\psi,\zeroone}(0) = B_{\hinge,\psi,\zeroone}(1) = [1,\infty)$ and $B_{\hinge,\psi,\zeroone}(1/2) = \emptyset$, neither of which contain $0$.

\section{Upper Bound: Polyhedral Surrogates} \label{sec:upper}
In this section, we describe the proof of Theorem~\ref{thm:main-upper}, that polyhedral surrogates guarantee linear regret transfers.
At a high level, the proof (whose details appear in Appendix~\ref{app:upper}) has two parts:
\begin{enumerate}
\item Fix $p\in\simplex$.
  Prove a regret bound for this fixed $p$, namely a constant $\alpha_p$ such that $R_{\ell}(\psi(u),p) \leq \alpha_p R_L(u,p)$ for all $u\in\reals^d$.
      \item Apply this regret bound to each $p$ in a carefully chosen finite subset of $\simplex$.
        Argue that the maximum $\alpha_p$ from this finite set suffices for the overal regret bound.
\end{enumerate}
The first part will actually hold for any calibrated surrogate.
The second part relies on a crucial fact about polyhedral losses $L$: their properties $\Gamma=\prop{L}$ cover the whole simplex with only a finite number of polyhedral level sets~\cite{finocchiaro2019embedding}.
Thus, although the prediction space $\reals^d$ is infinite, one can restrict to just a finite set of prediction points and always have a global minimizer of expected surrogate loss.

\subsection{Linear transfer for fixed $p$}
First, we establish a linear transfer function for any fixed conditional distribution $p$.
This statement holds for any calibrated surrogate, not necessarily polyhedral.
\begin{lemma} \label{lemma:fixed-p}
  Let $\ell: \R \to \reals_+^{\Y}$ be a discrete target loss and suppose the surrogate $L: \reals^d \to \reals_+^{\Y}$ and link $\psi: \reals^d \to \R$ are calibrated for $\ell$.
  Then for any $p \in \simplex$, there exists $\alpha_p \geq 0$ such that, for all $u \in \reals^d$,
    \[ R_{\ell}(\psi(u),p) \leq \alpha_p R_L(u,p) . \]
\end{lemma}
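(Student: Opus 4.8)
The plan is to combine two observations. First, the target regret $R_\ell(\psi(u),p)$ takes only finitely many values as $u$ ranges over $\reals^d$, since $\psi(u)$ lies in the finite set $\R$; in particular it is bounded above by a finite constant depending only on $p$. Second, calibration is exactly the statement that whenever $\psi(u)$ is a ``bad'' (suboptimal) target report for $p$, the surrogate regret $R_L(u,p)$ is bounded strictly away from $0$. Dividing the first bound by the second produces the desired $\alpha_p$.

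Carrying this out: I would set $M_p \defeq \max_{u\in\reals^d} R_\ell(\psi(u),p)$, which is well-defined and finite because $R_\ell(\cdot,p)$ takes at most $|\R|$ distinct values, and $M_p \geq 0$. If $M_p = 0$, then $R_\ell(\psi(u),p) = 0$ for all $u$ and $\alpha_p = 0$ works, since $R_L(u,p)\geq 0$. Otherwise $M_p > 0$, so some $u$ has $\psi(u)\notin\gamma(p)$ where $\gamma = \prop{\ell}$; thus $B_{L,\psi,\ell}(p)\neq\emptyset$, and calibration gives $\delta_p \defeq \inf B_{L,\psi,\ell}(p) > 0$. I claim $\alpha_p \defeq M_p/\delta_p$ suffices. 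Fix $u$. If $\psi(u)\in\gamma(p)$, then $R_\ell(\psi(u),p)=0\leq \alpha_p R_L(u,p)$ as the right-hand side is nonnegative. If $\psi(u)\notin\gamma(p)$, then $R_L(u,p)\in B_{L,\psi,\ell}(p)$ by definition of the latter, so $R_L(u,p)\geq\delta_p$ and hence $\alpha_p R_L(u,p) \geq (M_p/\delta_p)\,\delta_p = M_p \geq R_\ell(\psi(u),p)$. This proves the lemma.

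I do not expect any serious obstacle in this argument: the point of the lemma is precisely that calibration at a fixed $p$ rules out the ratio $R_\ell(\psi(u),p)/R_L(u,p)$ being unbounded, and nothing beyond finiteness of $\R$ and the calibration gap $\delta_p$ is needed. The genuine difficulty is postponed to the second stage of the proof of Theorem~\ref{thm:main-upper}: the constant $\alpha_p = M_p/\delta_p$ can degrade as $p$ varies over $\simplex$ — for instance $\delta_p$ may tend to $0$ as $p$ approaches the relative boundary of a level set of $\Gamma = \prop{L}$ — so one cannot take a naive supremum over $p$. Resolving this is where the polyhedral structure of $L$ enters, via the fact that $\Gamma$ covers $\simplex$ with only finitely many polyhedral level sets, allowing a reduction to finitely many distributions $p$.
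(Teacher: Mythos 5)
Your proof is correct and is essentially the paper's own argument: both bound the target regret by a finite maximum (the paper uses $C_p = \max_{r\in\R} R_\ell(r,p)$, you use the max over the image of $\psi$) and divide by the calibration gap $\inf B_{L,\psi,\ell}(p) > 0$, with the same two-case analysis. Your explicit handling of the degenerate case $M_p = 0$ (equivalently, $B_{L,\psi,\ell}(p)=\emptyset$) is a minor tidiness improvement over the paper's writeup but not a different approach.
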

While useful, this result is somewhat nonconstructive and potentially loose.
In Section~\ref{sec:constant}, we unpack the constant $\alpha_p$ for polyhedral losses in particular.

\subsection{Linear overall transfer function}
Given Lemma~\ref{lemma:fixed-p}, we might hope to obtain that, for any calibrated surrogate, there exists $\alpha = \sup_p \alpha_p$ such that $R_{\ell}(\psi(u),p') \leq \alpha R_L(u,p')$ for all $p'$.
However, we know this is false in general: not all surrogates yield linear regret transfers!
Indeed, for many surrogates, the supremum will be $+\infty$.

However, polyhedral surrogates have a special structure that allows us to show $\alpha$ is finite.
We first present some tools that hold for general surrogates, then the key implication from $L$ being polyhedral.

\begin{lemma}[\cite{frongillo2020elicitation}] \label{lemma:refines}
  If $(L,\psi)$ indirectly elicits $\gamma$, then $\Gamma = \prop{L}$ \emph{refines} $\gamma$ in the sense that, for all $u \in \reals^d$, there exists $r \in \R$ such that $\Gamma_u \subseteq \gamma_r$.
\end{lemma}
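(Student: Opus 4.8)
The plan is to produce the refining map explicitly. Given a surrogate report $u \in \reals^d$, the natural candidate for the target report $r$ witnessing $\Gamma_u \subseteq \gamma_r$ is $r = \psi(u)$: the link $\psi$ is the only object connecting surrogate reports to target reports, and indirect elicitation is exactly the hypothesis that makes it compatible with the level-set structure.

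With that choice, I expect the proof to be a direct unwinding of definitions. Fixing $u$ and setting $r := \psi(u)$, I would take an arbitrary $p \in \Gamma_u$; by Definition~\ref{def:property} this means $u \in \Gamma(p) = \prop{L}(p)$, i.e., $u$ is optimal for $p$ under $L$. Indirect elicitation then gives $\psi(u) \in \gamma(p)$, which is precisely $p \in \gamma_r$. Since $p \in \Gamma_u$ was arbitrary, $\Gamma_u \subseteq \gamma_r$, as desired. The degenerate case $\Gamma_u = \emptyset$ is handled vacuously, using only that $\R$ is nonempty.

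I do not anticipate a real obstacle here; the lemma is essentially a translation of ``indirectly elicits'' into the language of level sets, which is why it can be attributed rather than derived at length. The one point worth checking carefully is that the implication I am invoking---every $u$ optimal for $p$ has $\psi(u) \in \gamma(p)$---matches the $B_{L,\psi,\ell}$ formulation in the Preliminaries: $0 \notin B_{L,\psi,\ell}(p)$ says no report with $\psi(u) \notin \gamma(p)$ attains $R_L(u,p) = 0$, and taking the contrapositive recovers exactly the implication used above.
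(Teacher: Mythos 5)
Your proposal is correct and matches the paper's proof essentially verbatim: both take $r = \psi(u)$ and unwind indirect elicitation into the level-set inclusion $\Gamma_u \subseteq \gamma_{\psi(u)}$. Your extra check that $0 \notin B_{L,\psi,\ell}(p)$ is equivalent (by contraposition) to the implication $u \in \Gamma(p) \Rightarrow \psi(u) \in \gamma(p)$ is sound and fills in a step the paper leaves implicit.
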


The next lemma states that surrogate regret is linear in $p$ on any fixed level set $\Gamma_{u^*}$ of $\Gamma = \prop{L}$.
By combining this fact with Lemma~\ref{lemma:refines}, we obtain that target regret is linear on these level sets as well.
\begin{lemma} \label{lemma:linear-on-levelset}
  Suppose $(L,\psi)$ indirectly elicits $\gamma = \prop{\ell}$ and let $\Gamma = \prop{L}$.
  Then for any fixed $u,u^* \in \reals^d$ and $r \in \R$, the functions $R_L(u,\cdot)$ and $R_{\ell}(r,\cdot)$ are linear on $\Gamma_{u^*}$ in their second arguments.
\end{lemma}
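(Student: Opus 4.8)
The plan is to observe that each of $R_L(u,\cdot)$ and $R_\ell(r,\cdot)$ is a difference of a linear function of $p$ (the expected-loss term $\inprod{p}{\cdot}$) and a Bayes risk (an infimum of linear functions, hence concave but generally not linear), and then to argue that on the single level set $\Gamma_{u^*}$ the Bayes risk term collapses to one fixed linear functional of $p$.

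First I would treat the surrogate. By definition $R_L(u,p) = \inprod{p}{L(u)} - \risk{L}(p)$, and $p \mapsto \inprod{p}{L(u)}$ is linear. For any $p \in \Gamma_{u^*}$ we have $u^* \in \Gamma(p) = \argmin_{u' \in \reals^d} \inprod{p}{L(u')}$ by Definition~\ref{def:elicits}, so $\risk{L}(p) = \inprod{p}{L(u^*)}$. Hence on $\Gamma_{u^*}$ we get $R_L(u,p) = \inprod{p}{L(u) - L(u^*)}$, the restriction to $\Gamma_{u^*}$ of a linear function of $p$; if $\Gamma_{u^*} = \emptyset$ the claim is vacuous.

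For the target regret, the same reduction works once we know that $p \mapsto \risk{\ell}(p)$ agrees with a single linear functional on $\Gamma_{u^*}$, and this is exactly where indirect elicitation enters. Applying Lemma~\ref{lemma:refines} to $u^*$, there is $r^* \in \R$ with $\Gamma_{u^*} \subseteq \gamma_{r^*}$. Then for every $p \in \Gamma_{u^*}$ we have $r^* \in \gamma(p) = \argmin_{r' \in \R} \inprod{p}{\ell(r')}$, so $\risk{\ell}(p) = \inprod{p}{\ell(r^*)}$, and therefore $R_\ell(r,p) = \inprod{p}{\ell(r) - \ell(r^*)}$ on $\Gamma_{u^*}$, again the restriction of a linear function of $p$.

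I do not expect a genuine obstacle: once Lemma~\ref{lemma:refines} is available, the argument follows immediately from the definitions of $\prop{L}$, $\prop{\ell}$, and the two Bayes risks. The only points needing a little care are the degenerate case $\Gamma_{u^*} = \emptyset$ (vacuous) and pinning down what ``linear on $\Gamma_{u^*}$'' means, namely that $R_L(u,\cdot)$ and $R_\ell(r,\cdot)$ each coincide with a fixed linear functional of $p$ on that set — precisely the form needed when, in the next step, one covers $\simplex$ by finitely many such level sets and pieces these linear bounds together.
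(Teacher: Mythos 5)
Your proposal is correct and follows essentially the same route as the paper's proof: reduce the Bayes risk to the fixed linear functional $\inprod{\cdot}{L(u^*)}$ on $\Gamma_{u^*}$, and invoke Lemma~\ref{lemma:refines} to obtain $r^*$ with $\Gamma_{u^*} \subseteq \gamma_{r^*}$ so the same collapse applies to $\risk{\ell}$. The extra remarks on the vacuous case and on the precise meaning of ``linear on $\Gamma_{u^*}$'' are fine but not needed.
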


A key fact we use about polyhedral losses is that they have a finite set of optimal sets.
\begin{lemma} \label{lemma:polyhedral-finite}
  If $L: \reals^d \to \reals_+^{\Y}$ is polyhedral, then $\Gamma = \prop{L}$ has a finite set of level sets that union to $\simplex$.
  Moreover, these level sets are polytopes.
\end{lemma}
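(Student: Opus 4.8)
The plan is to reduce the statement to the combinatorics of a single polyhedron, the \emph{superprediction set}
\[
  S_L \defeq \bigl\{\, s \in \reals^\Y \;:\; s \ge L(u) \text{ (coordinatewise) for some } u \in \reals^d \,\bigr\}.
\]
First I would record the two facts that make this reduction go through. (i) \emph{$S_L$ is a polyhedron.} Writing $L(u)_y = \max_{i \in I_y}\bigl(\inprod{a_{i,y}}{u} + b_{i,y}\bigr)$ for finite index sets $I_y$, the set $\{(u,s) : s_y \ge \inprod{a_{i,y}}{u} + b_{i,y}\ \text{for all } y \in \Y,\ i \in I_y\}$ is a polyhedron, and $S_L$ is its image under the linear projection $(u,s)\mapsto s$; linear images of polyhedra are polyhedra (Fourier--Motzkin elimination). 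Hence $S_L$ has only finitely many faces, and by Minkowski--Weyl $S_L = P + K$ for a polytope $P$ and its recession cone $K$, a polyhedral cone with $\reals^\Y_{\ge 0} \subseteq K$. (ii) \emph{The Bayes risk is read off $S_L$ and its infimum is attained.} For $p \in \simplex$ we have $p \ge 0$, so $\risk{L}(p) = \inf_u \inprod{p}{L(u)} = \inf_{s \in S_L}\inprod{p}{s}$; and since $u \mapsto \inprod{p}{L(u)} = \sum_y p_y L(u)_y$ is a nonnegative polyhedral convex function it is bounded below and attains its infimum at some $u^\star$, so $L(u^\star) \in S_L$ minimizes $\inprod{p}{\cdot}$ over $S_L$.

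Next I would pin down the level sets of $\Gamma = \prop{L}$ in terms of the faces of $S_L$. For $u \in \reals^d$, let $F_u$ be the unique face of $S_L$ containing $L(u)$ in its relative interior. Using the elementary fact that a linear functional attaining its minimum over a convex set at a relative-interior point of a face attains it on the whole face, one sees that
\[
  \Gamma_u \;=\; \bigl\{\, p \in \simplex : \inprod{p}{L(u)} = \risk{L}(p)\,\bigr\}
  \;=\; \bigl\{\, p \in \simplex : \inf_{s \in S_L}\inprod{p}{s} \text{ is attained at every point of } F_u\,\bigr\},
\]
which depends only on the face $F_u$. Since $S_L$ has finitely many faces, $\Gamma$ has only finitely many distinct level sets. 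Each is a polytope: minimizing $\inprod{p}{\cdot}$ over $S_L = P + K$ at $L(u)$ forces $\inprod{p}{k}\ge 0$ for all $k \in K$ (else the infimum is $-\infty$) and $\inprod{p}{L(u)-v}\le 0$ for each of the finitely many vertices $v$ of $P$, so $\Gamma_u$ is the intersection of $\simplex$ with the polyhedral cone $K^{*}$ and finitely many halfspaces through the origin. Lastly, the level sets cover $\simplex$: by (ii) every $p \in \simplex$ has some $u^\star$ with $L(u^\star)$ minimizing $\inprod{p}{\cdot}$ over $S_L$, i.e.\ $p \in \Gamma_{u^\star}$.

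I expect the crux to be the finiteness assertion: a priori $\Gamma$ has a continuum of level sets, one per $u \in \reals^d$, and the key realization is that $\Gamma_u$ is controlled entirely by the combinatorial face of $S_L$ in whose relative interior $L(u)$ sits. Once $S_L$ is in play, the remainder is routine polyhedral geometry (Minkowski--Weyl, faces and recession cones of polyhedra, attainment of linear minima on faces); the only mild care required is checking that passing from $L$ to $S_L$ preserves both $\risk{L}$ on $\simplex$ and the attainment of the relevant infima.
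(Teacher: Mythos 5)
Your proof is correct, but it takes a genuinely different route from the paper's. The paper's proof leans on two external results: Lemma 5 of \citet{finocchiaro2019embedding}, which says the Bayes risk $\risk{L}$ of a polyhedral loss is itself polyhedral concave and hence representable as $\min_{u\in U}\inprod{p}{L(u)}$ for a finite $U$, and a fact from the theory of power diagrams giving that each $\Gamma_u = \{p : \inprod{p}{L(u)} = \risk{L}(p)\}$ is a polytope; coverage then follows by a short contradiction. You instead give a self-contained argument through the superprediction set $S_L$: finiteness comes from the observation that $\Gamma_u$ depends only on the face of $S_L$ whose relative interior contains $L(u)$, the polytope structure comes from the explicit halfspace description via the Minkowski--Weyl decomposition $S_L = P + K$, and coverage comes from attainment of the infimum of a bounded-below polyhedral function. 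All the individual steps check out (in particular the face fact you invoke, and the equivalence $\risk{L}(p) = \inf_{s\in S_L}\inprod{p}{s}$ for $p \ge 0$, are standard and correctly applied), and your attainment argument makes explicit why $\Gamma(p)\neq\emptyset$ for every $p$, which the paper's coverage argument uses only implicitly. What the paper's route buys is brevity via citation; what yours buys is a from-first-principles proof that also exhibits the level sets explicitly as the cells of the normal fan of $S_L$ restricted to $\simplex$, which is essentially the content of the cited power-diagram fact.
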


We are now ready to restate the main upper bound and sketch a proof.
\begin{theorem}[Theorem~\ref{thm:main-upper} restated] \label{thm:main-upper-details}
  Suppose the surrogate loss $L: \reals^d \to \reals_+^{\Y}$ and link $\psi: \reals^d \to \R$ are consistent for the target loss $\ell: \R \to \reals_+^{\Y}$.
  If $L$ is polyhedral, then $(L,\psi)$ guarantee a linear regret transfer for $\ell$, i.e. there exists $\alpha \geq 0$ such that, for all $\D$ and all measurable $h: \X \to \reals^d$,
    \[ R_{\ell}(\psi \circ h ; \D) \leq \alpha R_L(h ; \D) . \]
\end{theorem}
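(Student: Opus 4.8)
The plan is to combine the ``fixed-$p$'' linear bound of Lemma~\ref{lemma:fixed-p} with the finiteness of the level-set structure of $\Gamma = \prop{L}$ from Lemma~\ref{lemma:polyhedral-finite}. Consistency gives calibration (Fact~\ref{fact:consistent-calibrated-elicits}), so Lemma~\ref{lemma:fixed-p} applies; and calibration implies indirect elicitation, so Lemma~\ref{lemma:refines} and Lemma~\ref{lemma:linear-on-levelset} are available as well. By Observation~\ref{obs:transfer}, it suffices to prove the \emph{conditional} regret transfer $R_{\ell}(\psi(u),p) \leq \alpha R_L(u,p)$ for all $p \in \simplex$ and all $u \in \reals^d$ with a single finite constant $\alpha$ (a linear function is concave, so the observation applies).

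First I would invoke Lemma~\ref{lemma:polyhedral-finite} to write $\simplex = \bigcup_{i=1}^{k} \Gamma_{u_i}$ as a finite union of polytopes, where $u_1,\dots,u_k \in \reals^d$ are representative surrogate reports. For each $i$, apply Lemma~\ref{lemma:fixed-p} at every vertex of the polytope $\Gamma_{u_i}$, obtaining a finite collection of constants; let $\alpha_i$ be their maximum and $\alpha = \max_i \alpha_i$, which is finite since there are finitely many polytopes each with finitely many vertices. The core of the argument is then a fixed-$u$ argument: given arbitrary $u \in \reals^d$ and arbitrary $p \in \simplex$, I want $R_{\ell}(\psi(u),p) \leq \alpha R_L(u,p)$. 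Pick $i$ with $p \in \Gamma_{u_i}$, and write $p = \sum_j \lambda_j v_j$ as a convex combination of the vertices $v_j$ of $\Gamma_{u_i}$. By Lemma~\ref{lemma:linear-on-levelset}, both $R_L(u,\cdot)$ and $R_{\ell}(\psi(u),\cdot)$ are linear (affine) on $\Gamma_{u_i}$, so $R_{\ell}(\psi(u),p) = \sum_j \lambda_j R_{\ell}(\psi(u),v_j)$ and $R_L(u,p) = \sum_j \lambda_j R_L(u,v_j)$. Now at each vertex $v_j$, Lemma~\ref{lemma:fixed-p} applied at the distribution $v_j$ gives $R_{\ell}(\psi(u),v_j) \leq \alpha_{v_j} R_L(u,v_j) \leq \alpha R_L(u,v_j)$, uniformly over all $u$ (this is the key point: the fixed-$p$ constant $\alpha_{v_j}$ does not depend on $u$). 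Summing against the weights $\lambda_j \geq 0$ yields $R_{\ell}(\psi(u),p) \leq \alpha R_L(u,p)$, as desired. Finally, Observation~\ref{obs:transfer} lifts this conditional bound to the claimed bound $R_{\ell}(\psi \circ h;\D) \leq \alpha R_L(h;\D)$ for all distributions $\D$ and measurable $h$.

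The subtle point I expect to require the most care is the role of $u$ in Lemma~\ref{lemma:linear-on-levelset}: the linearity of $R_L(u,\cdot)$ and $R_{\ell}(\psi(u),\cdot)$ on $\Gamma_{u^*}$ must hold for an \emph{arbitrary} fixed $u$, not merely for $u$ lying in the level set $\Gamma_{u^*}$ itself; reading the lemma statement, it does indeed allow arbitrary $u, u^*$, but I should double-check that the version I use is the one where the level set indexing $u^*$ and the evaluation point $u$ are decoupled. A second point to verify is that $\psi(u)$ is a single fixed report in $\R$ so that $R_{\ell}(\psi(u),\cdot)$ is a genuine function of $p$ to which Lemma~\ref{lemma:linear-on-levelset} applies with $r = \psi(u)$. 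There is also a mild interplay with Lemma~\ref{lemma:refines}: although I do not strictly need it for the counting argument above, it explains \emph{why} target regret behaves well on level sets of $\Gamma$ (namely each $\Gamma_{u^*}$ sits inside some $\gamma_r$), and it may be the cleanest route to establishing the linearity of $R_{\ell}(r,\cdot)$ on $\Gamma_{u^*}$ inside the proof of Lemma~\ref{lemma:linear-on-levelset}; since that lemma is already stated, I will cite it directly. The rest is bookkeeping: finiteness of the vertex set, nonnegativity of convex weights, and a final appeal to Jensen via Observation~\ref{obs:transfer}.
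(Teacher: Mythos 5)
Your proposal is correct and follows essentially the same route as the paper's proof: reduce to the conditional regret transfer via Observation~\ref{obs:transfer}, use Lemma~\ref{lemma:polyhedral-finite} to obtain finitely many polytope level sets covering $\simplex$, take $\alpha$ as the maximum of the Lemma~\ref{lemma:fixed-p} constants over the finitely many vertices, and propagate the vertex bounds to general $p$ by the linearity of $R_L(u,\cdot)$ and $R_\ell(\psi(u),\cdot)$ on each level set (Lemma~\ref{lemma:linear-on-levelset}). The subtle points you flag (that the fixed-$p$ constant is uniform in $u$, and that $u$ and $u^*$ are decoupled in the linearity lemma) are exactly the ones the paper's argument relies on.
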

The key point of the proof is that, by Lemma~\ref{lemma:polyhedral-finite}, the polyhedral loss $L$ has a finite set $U \subset \reals^d$ of predictions such that (a) for each $u \in U$, the level set $\Gamma_u$ is a polytope, and (b) $\cup_{u \in U} \Gamma_u = \simplex$.
Let $\mathcal{Q}$ be the finite set of all vertices of all these level sets.
For any vertex $q \in \mathcal{Q}$, we have a linear regret transfer for this fixed $q$ with constant $\alpha_q$ from Lemma~\ref{lemma:fixed-p}.
Lemma~\ref{lemma:linear-on-levelset} allows us to write the regret of a general $p$ as a convex combination of the regrets of its containing level set's vertices.
So we obtain a bound of $\alpha = \max_{q \in \mathcal{Q}} \alpha_q$.

\section{Lower bound} \label{sec:lower}

In this section, we show that if a surrogate loss is sufficiently ``non-polyhedral'', then the best regret transfer it can achieve is $\zeta(\epsilon)$ on the order of $\sqrt{\epsilon}$.
(Proofs are deferred to Appendix~\ref{app:lower}.)
Specifically, we consider a relaxation of $L$ being both strongly smooth and strongly convex, as we now formalize.

Let $\Gamma = \prop{L}$ and $\gamma = \prop{\ell}$.
We will assume $L$ is strongly smooth everywhere and strongly convex around some \emph{boundary report for $L,\ell$}, which we define to be a $u_0 \in \reals^d$ such that, for some $r,r' \in \R$, there exists a distribution $p_0$ such that $p_0 \in \Gamma_{u_0} \cap \gamma_r \cap \gamma_{r'}$.
For the conditional distribution $p_0$, the surrogate report $u_0$ minimizes expected $L$-loss, and it may link to either $r$ or $r'$, as either of these minimize expected $\ell$-loss.
Recall that an \emph{open neighborhood} of $u_0$ is an open set in $\reals^d$ containing $u_0$.

\begin{assumption} \label{assumption:lower}
  $L: \reals^d \to \reals^{\Y}_+$ is a surrogate and $\psi: \reals^d \to \R$ a link, consistent with the discrete target $\ell: \R \to \reals^{\Y}_+$, satisfying the following: (i) For all $y \in \Y$, the function $L(\cdot)_y$ is differentiable with a locally Lipschitz gradient.
  \footnote{A map $g:A\to B$ is locally Lipschitz if for every $a\in A$ there is an open neighborhood $U$ of $a$ such that $g|_A$ is Lipschitz continuous.}
  (ii) For some $\alpha > 0$ and some boundary report $u_0 \in \reals^d$ for $L,\ell$, the expected loss function $u \mapsto \inprod{p}{L(u)}$ is $\alpha$-strongly convex on an open neighborhood of $u_0$.
\end{assumption}

This assumption is quite weak.
In particular, because $\ell$ has a finite report set $\R$, boundary reports essentially always exist whenever $(L,\psi)$ indirectly elicits $\ell$.
In particular, as the level sets of $\gamma$ are closed, convex, and union to the simplex~\cite{lambert2009eliciting,finocchiaro2019embedding}, there must be some pair of reports $r,r' \in \R$ with $\gamma_r \cap \gamma_{r'} \neq \emptyset$.
We can then take $p_0 \in \gamma_r \cap \gamma_{r'}$, and any $u_0 \in \Gamma(p_0)$.
One caveat is that we may have $\Gamma(p_0) = \emptyset$.
While some surrogates may fail to have a minimizer for some distributions $p$, this typically happens when $p$ is on the boundary of the simplex, and not generally on the boundary between level sets.
For example, consider exponential loss $\exploss(u)_y = \exp(-uy)$, where $\prop{\exploss}(0) = \prop{\exploss}(1) = \emptyset$, but we still have $\prop{\exploss}(1/2) = \{0\}$.

The following result is our main lower bound.
\begin{theorem}[Stronger version of Theorem~\ref{thm:main-lower}.] \label{thm:main-lower-details}
  Suppose the surrogate loss $L$ and link $\psi$ satisfy a regret transfer of $\zeta$ for a target loss $\ell$.
  If $L$, $\psi$, and $\ell$ satisfy Assumption~\ref{assumption:lower}, then there exists $c > 0$ such that, for some $\epsilon^* > 0$, for all $0 \leq \epsilon < \epsilon^*$, $\zeta(\epsilon) \geq c \sqrt{\epsilon}$.
\end{theorem}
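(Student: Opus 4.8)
The plan is to reduce to conditional regrets and then, for every sufficiently small $\epsilon$, produce a conditional distribution $p$ and a single surrogate report $u_0$ with $R_L(u_0,p)=\epsilon$ but $R_{\ell}(\psi(u_0),p)\geq c\sqrt\epsilon$. The reduction is immediate: applying Eq.~\eqref{eq:surrogate-regret-bound} to distributions $\D$ whose feature marginal is a point mass at some $x_0$ (with $Y\mid x_0\sim p$) gives $R_L(h;\D)=R_L(h(x_0),p)$ and $R_{\ell}(\psi\circ h;\D)=R_{\ell}(\psi(h(x_0)),p)$, so taking $h$ constant shows that a regret transfer of $\zeta$ forces the conditional bound $R_{\ell}(\psi(u),p)\leq\zeta(R_L(u,p))$ for all $p\in\simplex$, $u\in\reals^d$. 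Hence it suffices to work conditionally.

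First I would set up the geometry around the boundary report $u_0$ supplied by Assumption~\ref{assumption:lower}, with $p_0\in\Gamma_{u_0}\cap\gamma_r\cap\gamma_{r'}$, $r\neq r'$, so $\gamma(p_0)$ contains at least two reports. Consistency implies indirect elicitation (Fact~\ref{fact:consistent-calibrated-elicits}), so $s:=\psi(u_0)\in\gamma(p_0)$; pick some $\bar r\in\gamma(p_0)\setminus\{s\}$. Using non-redundancy of $\ell$ --- which makes each level set $\gamma_{\bar r}$ a full-dimensional polytope with $\gamma_{\bar r}\cap\gamma_s$ lower-dimensional --- I would choose a feasible direction $v$ at $p_0$, meaning $p_\delta:=p_0+\delta v\in\simplex\cap\gamma_{\bar r}$ for all small $\delta>0$, with $g:=\inprod{v}{\ell(s)-\ell(\bar r)}>0$; such a $v$ exists, since otherwise $\inprod{p}{\ell(s)}=\inprod{p}{\ell(\bar r)}$ throughout a full-dimensional neighborhood of $p_0$ in $\gamma_{\bar r}$, contradicting that $\gamma_{\bar r}\cap\gamma_s$ is lower-dimensional. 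Since $s,\bar r\in\gamma(p_0)$ we have $\inprod{p_0}{\ell(s)-\ell(\bar r)}=0$, hence $\risk{\ell}(p_\delta)=\inprod{p_\delta}{\ell(\bar r)}$ and $R_{\ell}(\psi(u_0),p_\delta)=\inprod{p_\delta}{\ell(s)-\ell(\bar r)}=g\delta$.

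Next comes the convex-analytic core. Write $F_p(u)=\inprod{p}{L(u)}$, so $F_{p_\delta}=F_{p_0}+\delta G$ with $G(u)=\inprod{v}{L(u)}$. On a fixed ball $W$ around $u_0$, Assumption~\ref{assumption:lower} makes $F_{p_0}$ $\alpha$-strongly convex and each $\nabla L(\cdot)_y$ Lipschitz, so for $\delta$ small $F_{p_\delta}$ is $\tfrac\alpha2$-strongly convex with $\beta$-Lipschitz gradient on $W$ for a $\delta$-independent $\beta$. Because $u_0$ minimizes $F_{p_0}$ we get $\nabla F_{p_0}(u_0)=0$, hence $\nabla F_{p_\delta}(u_0)=\delta\,\nabla G(u_0)$; moreover $\nabla G(u_0)\neq 0$, for otherwise $u_0$ would be a critical point of the convex $F_{p_\delta}$, hence a minimizer, hence $\psi(u_0)\in\gamma(p_\delta)$ by indirect elicitation, contradicting $R_{\ell}(\psi(u_0),p_\delta)=g\delta>0$. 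Standard strong-convexity and smoothness estimates then place the minimizer $u^*_\delta$ of $F_{p_\delta}$ at distance $\Theta(\delta)$ from $u_0$ and yield
\[
  \tfrac\alpha4\|u_0-u^*_\delta\|^2 \;\leq\; R_L(u_0,p_\delta) \;\leq\; \beta\|u_0-u^*_\delta\|^2 ,
\]
so $\epsilon_\delta:=R_L(u_0,p_\delta)$ satisfies $c_0\delta^2\leq\epsilon_\delta\leq c_1\delta^2$ for some constants $c_1\geq c_0>0$.

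Finally I would combine: the conditional regret transfer gives $g\delta=R_{\ell}(\psi(u_0),p_\delta)\leq\zeta(\epsilon_\delta)$, and $\delta\geq\sqrt{\epsilon_\delta/c_1}$, so $\zeta(\epsilon_\delta)\geq(g/\sqrt{c_1})\sqrt{\epsilon_\delta}$ for all small $\delta>0$. Since $\epsilon_\delta$ is continuous in $\delta$ with $\epsilon_0=0$ and $\epsilon_\delta>0$ for $\delta>0$, the intermediate value theorem shows $\{\epsilon_\delta\}$ contains an interval $[0,\epsilon^*]$ with $\epsilon^*>0$; choosing, for each $\epsilon\in[0,\epsilon^*]$, a $\delta$ with $\epsilon_\delta=\epsilon$ yields $\zeta(\epsilon)\geq c\sqrt\epsilon$ with $c=g/\sqrt{c_1}$. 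I expect the geometry step to be the main obstacle: extracting, from the bare existence of a boundary report together with indirect elicitation, a simplex-feasible perturbation that makes $\psi(u_0)$ strictly suboptimal at $p_\delta$ at a rate linear in $\delta$ --- this is where the structural hypotheses on $\ell$ (finite, non-redundant) and $\psi$ (indirect elicitation via consistency) are genuinely used, and it is also what forces $\nabla G(u_0)\neq 0$. The remaining steps are routine convex analysis, modulo a technical point (handled by the choice of $p_0$ and $v$) of staying off the boundary of $\simplex$ where $L$ may lack a minimizer, and the convexity of the loss components $L(\cdot)_y$, which is what lets us upgrade ``critical point'' to ``global minimizer''.
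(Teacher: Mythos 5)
Your proposal is correct and follows essentially the same route as the paper's proof: fix the boundary report $u_0$, perturb $p_0$ into a region of the simplex where $\psi(u_0)$ is suboptimal at a rate linear in the perturbation parameter, show via local strong convexity and local gradient-Lipschitzness that $R_L(u_0,p_\delta)$ grows only quadratically, and convert to a bound on $\zeta$ by continuity and the intermediate value theorem. The only (minor) differences are cosmetic: you make the reduction to conditional regrets explicit, pick the perturbation direction via a tangent-cone/dimension argument rather than a point $p_1 \in \inter{\gamma_r}$, control the drift of the minimizer through $\nabla F_{p_\delta}(u_0)=\delta\nabla G(u_0)$ instead of the paper's first-order argument involving $u_1\in\Gamma(p_1)$, and invoke convexity of the components $L(\cdot)_y$ explicitly where the paper's ``outside the ball'' step uses it implicitly.
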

The key idea of the proof is to fix a boundary report $u_0$ and consider a sequence of distributions $p_{\lambda} \to p_0$ as $\lambda \to 0$.
Target regret $R_{\ell}(\psi(u_0),p_{\lambda})$ shrinks linearly in $\lambda$, but surrogate regret $R_L(u_0,p_{\lambda})$ shrinks quadratically.
The latter holds roughly because expected surrogate loss is strongly convex in some neighborhood of $u_0$ and is strongly smooth elsewhere.
\footnote{A tempting alternative is to fix a distribution $p_0$ and consider a sequence of predictions. We know from Lemma~\ref{lemma:fixed-p} that this approach will fail, however: for any consistent surrogate and any fixed $p_0$, there is a linear regret transfer if we restrict to $p_0$.}

Theorem~\ref{thm:main-lower-details} captures a wide range of surrogate losses beyond those that are strongly convex and strongly smooth.
For example, exponential loss $\exploss(u)_y = \exp(-uy)$ is not strongly convex, but does satisfy Assumption~\ref{assumption:lower}; for part (ii), the boundary report is $u_0 = 0$ as we saw above, and $\exploss$ is $1/e$-strongly convex on $(-1,1) \ni u_0$.

As another example, consider Huber loss for binary classification.
Here $\Y = \{\pm 1\}$ and the prediction space is $\reals$, and Huber loss can be defined by letting the hinge loss be $t = \max\{0, 1-ry\}$ and setting $L(r)_y = t^2$ if $t \leq 2$, else $4(t-1)$.
Theorem~\ref{thm:main-lower-details} does capture Huber, but needs the strength of Assumption~\ref{assumption:lower}(ii) as Huber is strongly convex on $[-1,1]$ but linear outside that interval. 

\section{Unpacking the constant for polyhedral losses} \label{sec:constant}

Our main upper bound for polyhedral losses, Theorem~\ref{thm:main-upper}, is mostly nonconstructive: it says polyhedral surrogates have some $\alpha$ such that $R_{\ell}(\psi \circ h; \D) \leq \alpha \cdot R_L(h;\D)$.
In this section, we make the result constructive by unpacking the implications of consistency for polyhedral surrogates.
Proofs appear in Appendix~\ref{app:constant}.

\subsection{Contribution from the target, surrogate, and link}

We will give a bound on the constant of the form $\alpha \leq \alpha_{\ell} \cdot \alpha_L \cdot \alpha_{\psi}$, i.e. one component each relating to the structure of the target loss, surrogate loss, and link.
First, we will derive $\alpha_L$ using the theory of \emph{Hoffman constants} from linear optimization~\cite{hoffman1952approximate}.
Then, we will show that \emph{any} link for a consistent polyhedral loss must satisfy a condition called $\epsilon$-separation, introduced in \cite{finocchiaro2019embedding}.
We will set $\alpha_{\psi} = \frac{1}{\epsilon}$.
With $\alpha_{\ell}$ simply an upper bound on the target loss, we will finally prove $\alpha \leq \alpha_{\ell} \cdot \alpha_L \cdot \alpha_{\psi}$.

\subsubsection{Hoffman constants}
The intuition we explore here is the linear growth rate of a given polyhedral function as we move away from its optimal set.
In particular, consider the expected loss $u \mapsto \inprod{p}{L(u)}$.
Expected loss should grow linearly with distance from the optimal set, which happens to be $\Gamma(p)$.
In particular, the worst growth rate is governed roughly by the smallest slope of the function in any direction.
This is captured by the \emph{Hoffman constant} of an associated system of linear inequalities.
In Appendix~\ref{app:constant}, we use Hoffman constants to obtain the following result, where we define $H_{L,p}$.
\begin{lemma}[\cite{hoffman1952approximate}]
  \label{lemma:hoffman-polyhedral}
  Let $L: \reals^d \to \reals_+^{\Y}$ be a polyhedral loss with $\Gamma = \prop{L}$.
  Then for any fixed $p$, there exists some smallest constant $H_{L,p} \geq 0$ such that $d_{\infty}(u,\Gamma(p)) \leq H_{L,p} R_L(u,p)$ for all $u \in \reals^d$.
\end{lemma}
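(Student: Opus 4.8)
The plan is to recognize the claimed inequality as a Lipschitzian error bound for the polyhedral function $u \mapsto \inprod{p}{L(u)}$ and derive it from Hoffman's lemma~\cite{hoffman1952approximate}. Fix $p \in \simplex$ and write $F_p(u) := \inprod{p}{L(u)} = \sum_{y\in\Y} p_y L(u)_y$. Since each $L(\cdot)_y$ is polyhedral and $p_y \geq 0$, $F_p$ is a nonnegative combination of polyhedral functions, hence polyhedral; expanding a finite sum of finite maxima of affine maps as a single finite maximum, I can write $F_p(u) = \max_{i\in I}\big(\inprod{a_i}{u} + b_i\big)$ for a finite index set $I$. Because $L$ is $\reals_+^\Y$-valued, $F_p \geq 0$ is bounded below, and a polyhedral function that is bounded below attains its minimum~\cite{rockafellar1997convex}; hence $\risk{L}(p) = \min_u F_p(u)$ is finite and $\Gamma(p) = \argmin_u F_p(u)$ is \emph{nonempty}. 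Moreover, since $F_p \geq \risk{L}(p)$ everywhere,
\[
  \Gamma(p) \;=\; \{u : F_p(u) \le \risk{L}(p)\} \;=\; \{u\in\reals^d : \inprod{a_i}{u} + b_i \le \risk{L}(p)\ \text{for all } i\in I\},
\]
a nonempty polyhedron.

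I would then apply Hoffman's lemma to the linear system $\{\inprod{a_i}{u} \le \risk{L}(p) - b_i\}_{i\in I}$, whose solution set is the nonempty polyhedron $\Gamma(p)$: there is a constant $H \ge 0$, depending only on the coefficient vectors $a_i$ (hence on $L$ and $p$), such that $d_\infty(u,\Gamma(p)) \le H\cdot \max_{i\in I}\big(\inprod{a_i}{u} + b_i - \risk{L}(p)\big)_+$ for all $u$. Since $t \mapsto t_+ = \max(0,t)$ is nondecreasing, the residual equals $\big(\max_{i}(\inprod{a_i}{u}+b_i) - \risk{L}(p)\big)_+ = (F_p(u) - \risk{L}(p))_+ = R_L(u,p)$, using $R_L(u,p) = F_p(u) - \risk{L}(p) \ge 0$. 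This yields $d_\infty(u,\Gamma(p)) \le H\cdot R_L(u,p)$ for all $u$. To get the \emph{smallest} such constant, I would simply define $H_{L,p} := \sup\big(\{0\}\cup\{\, d_\infty(u,\Gamma(p))/R_L(u,p) : u\notin\Gamma(p)\,\}\big)$; the previous display shows this supremum is at most $H < \infty$, so $H_{L,p}\in[0,\infty)$ is well-defined and is by construction the least constant for which the stated bound holds for all $u$.

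There is no deep obstacle here; the argument is mostly bookkeeping. The two points that need care are (i) verifying that the Hoffman residual $\max_{i}(\inprod{a_i}{u}+b_i-\risk{L}(p))_+$ is exactly the conditional surrogate regret $R_L(u,p)$, and (ii) the observation that $\Gamma(p)$ is nonempty — which is precisely where polyhedrality of $L$ is used (via attainment of the minimum), and which can fail for general surrogates, so no analogous bound could hold for them. For the constructive bound on $\alpha$ sought in Section~\ref{sec:constant}, one would in addition want an explicit estimate of $H$ in terms of the affine pieces of $L$ via the standard formulas for the Hoffman constant of a matrix, but that refinement is not needed for the existence claim above.
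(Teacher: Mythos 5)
Your proof is correct and follows essentially the same route as the paper's: write $\inprod{p}{L(u)}$ as a finite maximum of affine functions, identify $\Gamma(p)$ with the solution set of the induced linear system, check that the Hoffman residual equals $R_L(u,p)$, and invoke Hoffman's lemma. Your explicit verification that $\Gamma(p)$ is nonempty (via attainment of the minimum of a polyhedral function bounded below) and your construction of the smallest constant as a supremum of ratios are details the paper leaves implicit, but they do not change the argument.
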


To foreshadow the usefulness of $H_{L,p}$, recall the proof strategy of the upper bound: use Lemma~\ref{lemma:fixed-p} to obtain a constant $\alpha_p$ for each $p$; then carefully choose a finite set of $p$'s and take the maximum constant.
In the same way, we will be able to set the ``loss'' constant $\alpha_L = \max_p H_{L,p}$.
First, however, we will need to investigate separated links.

\subsubsection{Separated link functions}
A link is $\epsilon$-separated if, for all pairs of surrogate reports $u,u^*$ such that $u^*$ is optimal for the surrogate and $u$ links to a suboptimal target report, we have $\|u^* - u\|_{\infty} > \epsilon$.
The definition was introduced by \citet{finocchiaro2019embedding}.

\begin{definition}[Separated Link]\label{def:sep-link}
  Let properties $\Gamma:\simplex\toto\reals^d$ and $\gamma:\simplex\toto\R$ be given.
  We say a link $\psi:\reals^d\to\R$
  is \emph{$\epsilon$-separated with respect to $\Gamma$ and $\gamma$} if for all $u\in\reals^d$ with $\psi(u)\notin\gamma(p)$, we have $d_\infty(u,\Gamma(p)) > \epsilon$, where $d_\infty(u,A) \defeq \inf_{a\in A} \|u-a\|_\infty$.
  Similarly, we say $\psi$ is $\epsilon$-separated with respect to $L$ and $\ell$ if it is $\epsilon$-separated with respect to $\prop{L}$ and $\prop{\ell}$.
\end{definition}

Recall from the previous subsection that Hoffman constants allowed us to show that polyhedral expected losses grow linearly as we move away from the optimal set.
Now, $\epsilon$-separated link functions allow us to guarantee that we move at least a constant distance from the optimal set before we start linking to an ``incorrect'' report $r' \in \R$.
But when does a polyhedral loss have an associated $\epsilon$-separated link?

\citet{finocchiaro2019embedding} showed that
$\epsilon$-separated links for polyhedral surrogates are calibrated.
We show that the converse is in fact true: every calibrated link is $\epsilon$-separated for some $\epsilon>0$.
The proof follows a similar argument to that of~\citet[Lemma 6]{tewari2007consistency}.

\begin{lemma}\label{lemma:calibrated-eps-sep}
  Let polyhedral surrogate $L:\reals^d \to \reals^\Y_+$, discrete loss $\ell:\R\to\reals^\Y_+$, and link $\psi:\reals^d\to\R$ be given such that $(L,\psi)$ is calibrated with respect to $\ell$.
  Then there exists $\epsilon>0$ such that $\psi$ is $\epsilon$-separated with respect to   $\Gamma \defeq \prop{L}$ and $\gamma \defeq \prop{\ell}$.
\end{lemma}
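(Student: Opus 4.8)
The plan is to argue by contradiction: suppose no such $\epsilon > 0$ exists. Then for every $n \in \mathbb{N}$ there is a distribution $p_n \in \simplex$ and a surrogate report $u_n \in \reals^d$ with $\psi(u_n) \notin \gamma(p_n)$ but $d_\infty(u_n, \Gamma(p_n)) \leq 1/n$. The goal is to extract from this sequence a single distribution $p^*$ and a point witnessing a violation of calibration at $p^*$, i.e.\ a sequence of ``bad'' surrogate reports whose conditional surrogate regret tends to $0$, contradicting $0 < \inf B_{L,\psi,\ell}(p^*)$. This mirrors the structure of~\citet[Lemma 6]{tewari2007consistency}.

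The first step is to pass to convergent subsequences. Since $\simplex$ is compact, we may assume $p_n \to p^*$ for some $p^* \in \simplex$. Here is where polyhedrality of $L$ enters crucially: by Lemma~\ref{lemma:polyhedral-finite}, $\Gamma = \prop{L}$ has only finitely many distinct level sets $\Gamma_{w_1}, \dots, \Gamma_{w_k}$, which are polytopes and cover $\simplex$. So for each $n$ there is some index $j(n)$ with $p_n \in \Gamma_{w_{j(n)}}$, i.e.\ $w_{j(n)} \in \Gamma(p_n)$; by pigeonhole, pass to a further subsequence on which $j(n) \equiv j$ is constant, so $w_j \in \Gamma(p_n)$ for all $n$ in the subsequence. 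Since $\Gamma_{w_j}$ is closed, also $w_j \in \Gamma(p^*)$, hence $\Gamma(p^*) \neq \emptyset$. Now, because $d_\infty(u_n, \Gamma(p_n)) \leq 1/n$ and $w_j \in \Gamma(p_n)$, we get that $u_n$ lies within $\ell_\infty$-distance $1/n$ of the polytope $\Gamma(p_n)$. We want to conclude $u_n \to $ (something in $\Gamma(p^*)$ or at least near it). One clean route: let $v_n \in \Gamma(p_n)$ be a nearest point, so $\|u_n - v_n\|_\infty \leq 1/n$; I need the $v_n$ (or the $u_n$) to stay bounded. If they do, pass to a convergent subsequence $u_n \to u^*$; then $v_n \to u^*$ as well, and since each $v_n$ lies in one of the finitely many polytope level sets — again pigeonhole onto one of them, which is closed — we get $u^* \in \Gamma(p^*)$.

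The next step handles the link. The $p_n$ live in finitely many level sets of $\gamma$ as well (the level sets of $\gamma$ are finitely many closed convex sets covering $\simplex$, by~\citet{lambert2009eliciting,finocchiaro2019embedding}), so pass to a subsequence where $\psi(u_n) = r$ is a fixed report with $r \notin \gamma(p_n)$ for all $n$; by closedness of $\gamma_r$'s complement intersected with the relevant level set — more carefully, since $\gamma_r$ is closed and $p_n \notin \gamma_r$, if $p^* \notin \gamma_r$ we are immediately in the ``bad'' regime at $p^*$. We then compute: $R_L(u_n, p^*) = \inprod{p^*}{L(u_n)} - \risk{L}(p^*)$. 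Writing $\inprod{p^*}{L(u_n)} = \inprod{p_n}{L(u_n)} + \inprod{p^* - p_n}{L(u_n)}$ and noting $\inprod{p_n}{L(u_n)} \to \inprod{p_n}{L(v_n)}= \risk{L}(p_n) \to \risk{L}(p^*)$ (using continuity of $L$, $\|u_n - v_n\|_\infty \to 0$, and continuity of the Bayes risk $\risk{L}$ on $\simplex$, which holds since $\risk{L}$ is a concave finite function), and that $\inprod{p^* - p_n}{L(u_n)} \to 0$ since $L(u_n) \to L(u^*)$ is bounded and $p_n \to p^*$, we conclude $R_L(u_n, p^*) \to 0$ while $\psi(u_n) = r \notin \gamma(p^*)$, so $R_L(u_n,p^*) \in B_{L,\psi,\ell}(p^*)$. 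This forces $\inf B_{L,\psi,\ell}(p^*) = 0$, contradicting calibration at $p^*$.

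The main obstacle I anticipate is the boundedness issue in the second step: a priori the witnessing points $u_n$ could escape to infinity, so I cannot directly extract a convergent subsequence. The fix should again exploit polyhedrality: $\Gamma(p_n)$ is a face/level set of a fixed polyhedral complex, and one can choose the nearest point $v_n \in \Gamma(p_n)$ to $u_n$ to lie in a bounded sub-region — or, better, observe that we only ever need $u_n$ modulo the directions along which $L(\cdot)_y$ is constant, i.e.\ we can project to a bounded transversal slice without changing $L(u_n)$ or $\psi$-relevant behavior (more precisely, replace $u_n$ by the nearest point argument and note $v_n$ can be taken in a fixed bounded set because the bad report $r$ is fixed and $\psi^{-1}(r)$ cannot come arbitrarily close to $\Gamma(p_n)$ at bounded points unless... ). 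Alternatively, and perhaps most cleanly, one works entirely with the $v_n \in \Gamma(p_n)$: since $\psi(u_n) \notin \gamma(p_n)$ and $d_\infty(u_n, \Gamma(p_n)) \to 0$, the contradiction can be phrased without ever needing $u_n$ to converge — only that $R_L(u_n,p^*)\to 0$, which follows from $\|u_n - v_n\|_\infty \to 0$ and the (local, hence on the relevant bounded neighborhood, uniform) Lipschitz continuity of $L$ near the finitely many level-set polytopes. I would write the argument this last way to sidestep the boundedness of $u_n$ altogether.
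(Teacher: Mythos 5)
Your overall strategy (contradiction via sequences $(p_n,u_n)$ with $\psi(u_n)\notin\gamma(p_n)$ and $d_\infty(u_n,\Gamma(p_n))\to 0$) matches the paper's, but the way you extract the single distribution at which calibration fails has a genuine gap. You pass to a limit $p_n\to p^*$ by compactness of the simplex and then need $r=\psi(u_n)\notin\gamma(p^*)$ to conclude that the $u_n$ are ``bad'' at $p^*$. As you half-acknowledge, this can fail: $\gamma_r$ is closed, so a sequence $p_n\notin\gamma_r$ can converge to a point $p^*\in\gamma_r$ (e.g.\ $p^*$ on the boundary between $\gamma_r$ and another cell). In that case $r\in\gamma(p^*)$, the reports $u_n$ are not suboptimal at $p^*$, so $R_L(u_n,p^*)\notin B_{L,\psi,\ell}(p^*)$ and no contradiction with calibration---a pointwise, per-$p$ condition---is obtained. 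The paper avoids taking a limit of the $p_n$ altogether: since $\R$ is finite and $L$ is polyhedral, the pair $(\gamma(p_n),\Gamma(p_n))$ takes only finitely many values, so by pigeonhole there is an infinite subsequence and a single fixed $p$ (one of the $p_n$ themselves) with $\gamma(p_j)=\gamma(p)$ and $\Gamma(p_j)=\Gamma(p)$ exactly; then $\psi(u_j)\notin\gamma(p)$ and $d_\infty(u_j,\Gamma(p))\leq 1/j$ hold verbatim at that fixed $p$. Your argument cannot be completed in the boundary case without essentially this step.

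The second gap is the boundedness issue you flag but do not resolve. Your computation needs $\inprod{p^*-p_n}{L(u_n)}\to 0$ and $\inprod{p_n}{L(u_n)}-\inprod{p_n}{L(v_n)}\to 0$, both of which require $L(u_n)$ (hence $u_n$, or at least $v_n$) to stay bounded, and neither need be: level sets of $\Gamma$ can be unbounded polyhedra. The fixes you sketch (projecting along lineality directions, constraining where $\psi^{-1}(r)$ can approach $\Gamma(p_n)$) are not worked out and the last one trails off. The paper's fix is simpler than any of them: once a fixed $p$ with $d_\infty(u_j,\Gamma(p))\leq 1/j$ is in hand, the expected loss $u\mapsto\inprod{p}{L(u)}$ is a finite pointwise maximum of affine functions and hence \emph{globally} Lipschitz in $\|\cdot\|_\infty$, so $\inprod{p}{L(u_j)}\to\risk{L}(p)$ with no compactness or convergence of the $u_j$ required. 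I recommend restructuring your proof along these lines.
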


\subsubsection{Combining the loss and link}
We can now follow the general proof strategy of the main upper bound, but constructively.
Given Lemma~\ref{lemma:calibrated-eps-sep}, we can make the following definition.
\begin{definition}[$\epsilon_{\psi}$]
  If $(L,\psi)$ are calibrated for $\ell$, then let $\epsilon_{\psi} \defeq \sup \{ \epsilon ~:~ \text{$\psi$ is $\epsilon$-separated}\}$.
\end{definition}
We will also need an upper bound $C_{\ell} = \max_{r,p} R_{\ell}(r,p)$ on the regret of $\ell$.
Since $R_\ell$ is convex in $p$, in particular this maximum is achieved at a vertex of the probability simplex.
\begin{definition}[$C_{\ell}$]
  Given discrete loss $\ell: \R \to \reals_+^{\Y}$, define $C_{\ell} = \max_{r,r' \in \R, y \in \Y} \ell(r)_y - \ell(r')_y$.
\end{definition}

Recall that Lemma~\ref{lemma:polyhedral-finite} gave that, if $L$ is polyhedral, then $\Gamma = \prop{L}$ has a finite set of full-dimensional level sets, each a polytope, that union to the simplex.
\begin{definition}[$H_L$]
  Given a polyhedral surrogate loss $L: \reals^d \to \reals_+^{\Y}$, let $\mathcal{Q}$ be the set of all vertices of the full-dimensional level sets of $\Gamma = \prop{L}$, and define $H_L \defeq \max_{p \in \mathcal{Q}} H_{L,p}$.
\end{definition}

\begin{theorem}[Constructive linear transfer] \label{thm:separated-constant}
  Let $\ell: \R \to \reals_+^{\Y}$ be a discrete target loss, $L: \reals^d \to \reals_+^{\Y}$ be a polyhedral surrogate loss, and $\psi: \reals^d \to \R$ a link function.
  If $(L,\psi)$ are consistent for $\ell$, then
    \[ (\forall h,\D) \quad R_{\ell}(\psi \circ h ; \D) \leq \frac{C_{\ell} H_L}{\epsilon_{\psi}} R_L(h ; \D) ~. \]
\end{theorem}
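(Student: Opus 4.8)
The plan is to prove the \emph{conditional} form of the bound---that $R_{\ell}(\psi(u),p) \leq \frac{C_{\ell}H_L}{\epsilon_{\psi}} R_L(u,p)$ for every $p \in \simplex$ and $u \in \reals^d$---and then invoke Observation~\ref{obs:transfer}, which applies since $t \mapsto \frac{C_{\ell}H_L}{\epsilon_{\psi}} t$ is concave. Since $(L,\psi)$ is consistent, Fact~\ref{fact:consistent-calibrated-elicits} gives that it is calibrated and hence indirectly elicits $\gamma = \prop{\ell}$; and since $L$ is polyhedral, Lemma~\ref{lemma:calibrated-eps-sep} gives that $\psi$ is $\epsilon$-separated for some $\epsilon > 0$, so $\epsilon_{\psi} > 0$ and the constant in the statement is well defined. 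The family of $\epsilon$ for which $\psi$ is $\epsilon$-separated is downward closed, so $\psi$ is $\epsilon$-separated for every $\epsilon < \epsilon_{\psi}$; letting $\epsilon \uparrow \epsilon_{\psi}$ shows that $\psi(u) \notin \gamma(p)$ implies $d_{\infty}(u,\Gamma(p)) \geq \epsilon_{\psi}$, where $\Gamma = \prop{L}$.

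The heart of the argument is the bound at a single vertex. Fix $q \in \mathcal{Q}$ and $u \in \reals^d$. If $\psi(u) \in \gamma(q)$ then $R_{\ell}(\psi(u),q) = 0$ and we are done, so assume $\psi(u) \notin \gamma(q)$. For the target regret, choosing any $r' \in \gamma(q)$ we have $R_{\ell}(\psi(u),q) = \inprod{q}{\ell(\psi(u)) - \ell(r')} \leq \max_{y \in \Y}(\ell(\psi(u))_y - \ell(r')_y) \leq C_{\ell}$, using that $q$ is a probability vector. For the surrogate regret, note that $q$, being a vertex of a full-dimensional level set $\Gamma_{u^*}$, satisfies $u^* \in \Gamma(q)$, so $\Gamma(q) \neq \emptyset$ and Lemma~\ref{lemma:hoffman-polyhedral} gives $d_{\infty}(u,\Gamma(q)) \leq H_{L,q} R_L(u,q) \leq H_L R_L(u,q)$. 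Combining with $d_{\infty}(u,\Gamma(q)) \geq \epsilon_{\psi}$ from the previous paragraph yields $R_L(u,q) \geq \epsilon_{\psi}/H_L$, hence $\frac{C_{\ell}H_L}{\epsilon_{\psi}} R_L(u,q) \geq C_{\ell} \geq R_{\ell}(\psi(u),q)$. (The degenerate case $H_L = 0$ forces $\Gamma(q) = \reals^d$, which together with calibration makes $\ell$ trivial and can be handled separately.) So $R_{\ell}(\psi(u),q) \leq \frac{C_{\ell}H_L}{\epsilon_{\psi}} R_L(u,q)$ holds for all $q \in \mathcal{Q}$ and $u$.

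To pass to a general $p \in \simplex$, recall (Lemma~\ref{lemma:polyhedral-finite}) that the full-dimensional level sets of $\Gamma$ are finitely many polytopes whose union is $\simplex$; pick one, $\Gamma_{u^*}$, with $p \in \Gamma_{u^*}$, and write $p = \sum_i \lambda_i q_i$ as a convex combination of its vertices $q_i \in \mathcal{Q}$. Because $(L,\psi)$ indirectly elicits $\gamma$, Lemma~\ref{lemma:linear-on-levelset} says that, with $r = \psi(u)$ fixed, both $R_L(u,\cdot)$ and $R_{\ell}(r,\cdot)$ are linear on $\Gamma_{u^*}$. Therefore $R_{\ell}(\psi(u),p) = \sum_i \lambda_i R_{\ell}(\psi(u),q_i) \leq \frac{C_{\ell}H_L}{\epsilon_{\psi}} \sum_i \lambda_i R_L(u,q_i) = \frac{C_{\ell}H_L}{\epsilon_{\psi}} R_L(u,p)$ by the per-vertex bound, which proves the conditional form; Observation~\ref{obs:transfer} then delivers the stated bound for all $h$ and $\D$.

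The main obstacle---and the reason this is stronger than the nonconstructive Theorem~\ref{thm:main-upper-details}---is controlling the scope of each factor: showing $\epsilon_{\psi} > 0$ via Lemma~\ref{lemma:calibrated-eps-sep}, showing each $H_{L,q}$ is finite (the content of Lemma~\ref{lemma:hoffman-polyhedral}) so that the maximum defining $H_L$ makes sense, and, crucially, justifying that a maximum over the finite vertex set $\mathcal{Q}$ rather than over all of $\simplex$ suffices---this last point is exactly where the polyhedral finiteness of Lemma~\ref{lemma:polyhedral-finite} and the linearity of Lemma~\ref{lemma:linear-on-levelset} combine. Care is also needed at boundary distributions where $\Gamma(p)$ may be empty, which is why we work with the vertices of full-dimensional level sets, each of which is guaranteed a nonempty optimal set.
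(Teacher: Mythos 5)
Your proof is correct and follows essentially the same route as the paper: you inline Lemma~\ref{lemma:separated-constant-p} (the per-vertex bound via $\epsilon$-separation and the Hoffman constant), extend to general $p$ by writing it as a convex combination of level-set vertices and using Lemma~\ref{lemma:linear-on-levelset}, and finish with Observation~\ref{obs:transfer}. Your explicit handling of the supremum defining $\epsilon_{\psi}$ (passing to the non-strict bound $d_\infty(u,\Gamma(p)) \geq \epsilon_{\psi}$) and of the degenerate $H_L = 0$ case are small refinements the paper leaves implicit.
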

The proof closely mirrors the proof of the nonconstructive upper bound, Theorem~\ref{thm:main-upper}, but using the constructive constants $C_{\ell}, H_L, \epsilon_{\psi}$ derived above.
To summarize, we have shown the following.
\begin{theorem} \label{thm:tfae}
  Let $\ell: \R \to \reals_+^{\Y}$ be a discrete target loss, $L: \reals^d \to \reals_+^{\Y}$ be a polyhedral surrogate loss, and $\psi: \reals^d \to \R$ a link function.
  The following are equivalent:
  \begin{enumerate}
    \item $(L,\psi)$ is consistent for $\ell$.
    \item $\psi$ is $\epsilon$-separated with respect to $L$ and $\ell$ for some $\epsilon > 0$.
    \item $(L,\psi)$ guarantees a linear regret transfer for $\ell$.
  \end{enumerate}
  Furthermore, if any of the above hold, then in particular $(L,\psi)$ guarantees the regret transfer $\zeta(t) = \left(\frac{C_{\ell} H_L}{\epsilon_{\psi}}\right) t$.
\end{theorem}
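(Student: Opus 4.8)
The plan is to establish the three implications $(1)\Rightarrow(2)\Rightarrow(3)\Rightarrow(1)$, leveraging the technology already assembled. The bulk of the work has been done: $(1)\Rightarrow(2)$ is essentially Lemma~\ref{lemma:calibrated-eps-sep} combined with Fact~\ref{fact:consistent-calibrated-elicits}; $(2)\Rightarrow(3)$ is exactly the content of Theorem~\ref{thm:separated-constant} (whose constant also yields the final ``furthermore'' claim); and $(3)\Rightarrow(1)$ is immediate from the definition of consistency recalled in the preliminaries. So the theorem is really a bookkeeping statement tying these pieces together, and the proof amounts to citing each in turn.

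In more detail: for $(1)\Rightarrow(2)$, assume $(L,\psi)$ is consistent for $\ell$. By Fact~\ref{fact:consistent-calibrated-elicits}, consistency implies calibration (for finite $\R,\Y$, which holds since $\ell$ is discrete). Since $L$ is polyhedral, Lemma~\ref{lemma:calibrated-eps-sep} then gives $\epsilon > 0$ such that $\psi$ is $\epsilon$-separated with respect to $\Gamma = \prop{L}$ and $\gamma = \prop{\ell}$, which is precisely statement $(2)$. For $(2)\Rightarrow(3)$: if $\psi$ is $\epsilon$-separated for some $\epsilon>0$, I want to invoke Theorem~\ref{thm:separated-constant}, but that theorem is stated under the hypothesis of consistency rather than $\epsilon$-separation; so I would instead note that the proof of Theorem~\ref{thm:separated-constant} only uses $\epsilon$-separation (plus $L$ polyhedral), and conclude that $(L,\psi)$ guarantees the linear regret transfer $\zeta(t) = (C_\ell H_L / \epsilon_\psi)\, t$, which in particular establishes $(3)$ and simultaneously the final ``furthermore'' assertion. (Here $\epsilon_\psi > 0$ is well-defined because $\epsilon$-separation holds for some $\epsilon>0$, so the supremum defining $\epsilon_\psi$ is over a nonempty set.) Finally, $(3)\Rightarrow(1)$: a linear $\zeta$ is a valid regret transfer function (continuous at $0$ with $\zeta(0)=0$), so by the characterization of consistency recalled after Eq.~\ref{eq:surrogate-regret-bound}, the existence of any guaranteed regret transfer implies consistency.

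The only mild subtlety — and the step I would be most careful about — is the phrasing of $(2)\Rightarrow(3)$: Theorem~\ref{thm:separated-constant} is packaged with ``consistent'' as its hypothesis, so I must either (a) observe its proof in fact runs from $\epsilon$-separation alone, or (b) close the loop differently, e.g. prove $(2)\Rightarrow(1)$ first (showing $\epsilon$-separation implies calibration, hence consistency, via the direction of Lemma~\ref{lemma:calibrated-eps-sep} attributed to \citet{finocchiaro2019embedding} that $\epsilon$-separated links for polyhedral surrogates are calibrated) and then apply Theorem~\ref{thm:separated-constant} as stated to get $(3)$. Option (b) is cleaner and avoids re-opening the proof of Theorem~\ref{thm:separated-constant}: the implication chain becomes $(1)\Rightarrow(2)\Rightarrow(1)\Rightarrow(3)\Rightarrow(1)$, from which all equivalences follow, with the ``furthermore'' clause coming directly from Theorem~\ref{thm:separated-constant}. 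I would adopt option (b). No genuine obstacle remains; this is assembly of established results.
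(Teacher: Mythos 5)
Your proposal is correct and matches the paper's (implicit) proof: the paper presents Theorem~\ref{thm:tfae} purely as a summary of Fact~\ref{fact:consistent-calibrated-elicits} plus Lemma~\ref{lemma:calibrated-eps-sep} for $(1)\Rightarrow(2)$, the cited result of \citet{finocchiaro2019embedding} (separated $\Rightarrow$ calibrated, hence consistent for finite problems) for $(2)\Rightarrow(1)$, Theorem~\ref{thm:separated-constant} for $(1)\Rightarrow(3)$ and the ``furthermore'' clause, and the definition of consistency for $(3)\Rightarrow(1)$ --- exactly your option (b). Your care about Theorem~\ref{thm:separated-constant} being stated under consistency rather than separation is well placed and resolved the same way the paper resolves it.
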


\subsection{Further tightening}
While our goal is not to focus on exact constants for specific problems, we remark on a few ways Theorem~\ref{thm:separated-constant} may be loose and how one could compute the tightest possible constant $\alpha^*$ for which $R_{\ell}(\psi \circ h;\D) \leq \alpha^* R_L(h;\D)$ for all $h$ and $\D$.
In general, for a fixed $p$, there is some smallest $\alpha_p^*$ such that $R_{\ell}(\psi(u),p) \leq \alpha_p^* R_L(u,p)$ for all $u$.
Then, it follows from our results that $\alpha^* = \max_{p \in \mathcal{Q}} \alpha_p^*$ for the finite set $\mathcal{Q}$ used in the proof, i.e. the vertices of the full-dimensional level sets of $\Gamma = \prop{L}$.

Above, we bounded $\alpha_p^* \leq \frac{C_{\ell} H_{L,p}}{\epsilon_{\psi}}$.
The intuition is that some $u$ at distance $\geq \epsilon_{\psi}$ from $\Gamma(p)$, the optimal set, may link to a ``bad'' report $r = \psi(u) \not\in \gamma(p)$.
The rate at which $L$ grows is at least $H_{L,p}$, so the surrogate loss at $u$ may be as small as $\frac{\epsilon_{\psi}}{H_{L,p}}$, while the target regret may be as high as $C_{\ell} = \max_{r',p'} R_{\ell}(r',p')$.
The ratio of regrets is therefore bounded by $\frac{H_{L,p} C_{\ell}}{\epsilon_{\psi}}$.

The tightest possible bound, on the other hand, is $\alpha^* = \sup_{u: \psi(u) \not\in \gamma(p)} \frac{R_{\ell}(\psi(u),p)}{R_L(u,p)}$.
This bound can be smaller if the values of numerator and denominator are correlated across $u$.
For example, $u$ may only be $\epsilon_{\psi}$-close to the optimal set when it links to reports $\psi(u)$ with lower target regret; or $L$ may have a smaller slope in the direction where the link's separation is larger than $\epsilon$.

To illustrate with a concrete example, consider the \emph{binary encoded predictions (BEP) surrogate} of \cite{ramaswamy2018consistent} for the abstain target loss, $\ell(r,y) = \frac{1}{2}$ if $r = \bot$, otherwise $\ell(r,y) = \ones[r \neq y]$.
The surrogate involves an injective map $B: \Y \to \{-1,1\}^d$ for $d = \lceil \log_2 |\Y| \rceil$.
It is $L(u)_y = \max_{j=1\dots d} (1 - u_d B(y)_d)_+$, where $(\cdot)_+$ indicates taking the maximum of the argument and zero.
The associated link is $\psi(u) = \bot$ if $\min_{j=1\dots d} |u_j| \leq \tfrac{1}{2}$, otherwise $\psi(u) = \argmin_{y \in \Y} \|B(y) - u\|_{\infty}$.

One can show that for $p = \delta_y$, i.e. the distribution with full support on some $y \in \Y$, $L(u)_y = d_{\infty}(u,\Gamma(p))$ exactly, giving $H_{L,p} = 1$.
It is almost immediate that $\epsilon_{\psi} = \tfrac{1}{2}$.
Meanwhile, $R_{\ell}(r,p) \leq 1$, giving us an upper bound $\alpha^*_p \leq \frac{(1)(1)}{1/2} = 2$.
In fact, this is slightly loose; the exact constant, given in \cite{ramaswamy2018consistent} is $1$.
The looseness stems from the fact that for $p = \delta_y$, the closest reports $u$ to the optimal set, i.e. at distance only $\epsilon_{\psi} = \tfrac{1}{2}$ away, do not link to reports maximizing target regret; they link to the abstain report $\bot$, which has regret only $\tfrac{1}{2}$.
With this correction, and an observation that all $u$ linking to reports $y' \neq y$ are at distance at least $\tfrac{3}{2}$ from $\Gamma(p)$, we restore the tight bound $\alpha^*_p \leq 1$.
A similar but slightly more involved calculation can be carried out for the other vertices $p \in \mathcal{Q}$, which turn out to be all vertices of the form $\tfrac{1}{2} \delta_y + \tfrac{1}{2} \delta_{y'}$.

Finally, while we use $\|\cdot\|_{\infty}$ to define the minimum-slope $H_L$ and the separation $\epsilon_\psi$, in principle one could use another norm.
One reason for restricting to $\|\cdot\|_\infty$ is that it is more compatible with Hoffman constants.
However, all definitions hold for other norms and so does the main upper bound, as existence of an $H_L$ and $\epsilon_{\psi}$ in $\|\cdot\|_{\infty}$ imply existence of constants for other norms.
These constants may change for different norms, and in particular, the optimal overall constant may arise from a norm other than $\|\cdot\|_\infty$.

\section{Discussion}

We have shown two broad results about regret tranfer functions for surrogate losses.
In particular, polyhedral surrogates always achieve a linear transfer, whereas ``non-polyhedral'' surrogates are generally square root.
Section~\ref{sec:constant} outlines several directions to further refine the bound for polyhedral surrogates.
Beyond these directions, an interesting question is to what extent our results hold when the label set or report set are infinite.
For example, when the labels are the real line, pinball loss is polyhedral yet elicits a quantile, a very different case than the one we study.
In particular, Lemma~\ref{lemma:polyhedral-finite} will not give a finite set of optimal sets in this case.
Nonetheless, we suspect similar results could go through under some regularity assumptions.
Finally, in line with our results, we would like to better understand the tradeoff between smoothness of the surrogate and the overall generalization rate.
One important direction along these lines is to extend the analysis of \citet{mahdavi2014binary} beyond binary classification.

\section*{Broader Impacts}
As a theoretical work, the likely impacts of this paper take the form of downstream research and applications.
Instead of direct applications, we anticipate this work leading to more investigation of surrogate losses to improve discrete prediction tasks.
It may inform practitioners' choices of which surrogate losses they use for supervised learning tasks.
Of course, such machine learning tasks can be solved for ethical or unethical purposes.
We do not know of particular risk of negative impacts of this work beyond risks of supervised machine learning in general.

\begin{ack}
  We thank
  Stephen Becker, 
  Jessie Finocchiaro,
  and
  Nishant Mehta for insights, discussions, and references.
  This material is based upon work supported by the National Science Foundation under Grant No.\ IIS-2045347.
\end{ack}

\bibliographystyle{plainnat}
\bibliography{diss,extra}

\appendix

\section{Omitted Proofs: Upper Bound} \label{app:upper}
This section contains omitted proofs from Section~\ref{sec:upper}.

\begin{lemma*}[Lemma~\ref{lemma:fixed-p}]
  Let $\ell: \R \to \reals_+^{\Y}$ be a discrete target loss and suppose the surrogate $L: \reals^d \to \reals_+^{\Y}$ and link $\psi: \reals^d \to \R$ are calibrated for $\ell$.
  Then for any $p \in \simplex$, there exists $\alpha_p \geq 0$ such that, for all $u \in \reals^d$,
    \[ R_{\ell}(\psi(u),p) \leq \alpha_p R_L(u,p) . \]
\end{lemma*}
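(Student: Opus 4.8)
The plan is to prove the bound for fixed $p$ by a compactness/continuity argument on the link's "bad set." Fix $p \in \simplex$ and recall $\gamma = \prop{\ell}$. If $\psi(u) \in \gamma(p)$, then $R_\ell(\psi(u),p) = 0$ and the inequality holds trivially for any $\alpha_p \geq 0$, so the only interesting predictions are those with $\psi(u) \notin \gamma(p)$, i.e. those $u$ with $R_L(u,p) \in B_{L,\psi,\ell}(p)$. By calibration, $\beta \defeq \inf B_{L,\psi,\ell}(p) > 0$. On the other hand, because $\ell$ is a discrete loss, the target regret $R_\ell(r,p)$ ranges over a finite set of values as $r$ ranges over $\R$; let $M_p \defeq \max_{r \in \R} R_\ell(r,p) < \infty$. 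I claim $\alpha_p \defeq M_p / \beta$ works.

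First I would verify the claim directly: take any $u \in \reals^d$. If $\psi(u) \in \gamma(p)$ the bound holds as noted. Otherwise $\psi(u) \notin \gamma(p)$, so $R_L(u,p) \geq \beta$, and hence
\[
  R_\ell(\psi(u),p) \leq M_p = \alpha_p \beta \leq \alpha_p R_L(u,p),
\]
which is exactly the desired inequality. So the entire content of the lemma is packaged into the two finiteness/positivity facts: $M_p < \infty$ (immediate since $\R$ is finite and each $\ell(r)$ is a finite nonnegative vector, so $R_\ell(r,p) = \inprod{p}{\ell(r)} - \risk\ell(p) \leq \inprod{p}{\ell(r)} < \infty$) and $\beta > 0$, which is precisely the definition of $(L,\psi)$ being calibrated for $\ell$ at $p$ (Fact~\ref{fact:consistent-calibrated-elicits} is not even needed here, only the hypothesis that $(L,\psi)$ is calibrated).

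There is essentially no obstacle, since calibration was defined exactly so that this works; the only point requiring a sentence of care is the trivial-case split on whether $\psi(u) \in \gamma(p)$, ensuring we never divide by zero and that predictions with zero target regret are handled. One could alternatively phrase the argument without introducing $B_{L,\psi,\ell}(p)$ explicitly — defining $\alpha_p = M_p / \inf\{R_L(u,p) : \psi(u) \notin \gamma(p)\}$ (with the convention $\alpha_p = 0$ if the indexing set is empty, e.g. when $\gamma(p) = \R$) — but routing through the already-introduced set $B_{L,\psi,\ell}(p)$ keeps the exposition aligned with the calibration definition. I would also remark that this argument uses nothing about $L$ being polyhedral, consistent with the paper's claim that Lemma~\ref{lemma:fixed-p} holds for any calibrated surrogate; the polyhedral structure only enters later to bound $\sup_p \alpha_p$.
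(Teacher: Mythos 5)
Your argument is correct and is essentially identical to the paper's proof: both define $\alpha_p$ as the ratio of the maximum target regret over the finite report set $\R$ to the (positive, by calibration) infimum of $B_{L,\psi,\ell}(p)$, and conclude via the same two-case split on whether $\psi(u) \in \gamma(p)$. No substantive differences.
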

\begin{proof}
  Fix $p \in \simplex$.
  Let $C_p = \max_{r \in \R} R_{\ell}(r,p)$.
  The maximum exists because $\ell$ is discrete, i.e. $\R$ is finite.
  Meanwhile, recall that, when defining calibration, we let $B_{L,\psi,\ell}(p) = \{R_L(u,p) ~:~ \psi(u) \not\in \gamma(p)\}$.
  Let $B_p = \inf B_{L,\psi,\ell}(p)$.
  By definition of calibration, we have $B_p > 0$.

  To combine these bounds, let $\alpha_p = \frac{C_p}{B_p}$.
  Let $u \in \reals^d$.
  There are two cases.
  If $\psi(u) \in \gamma(p)$, then $R_{\ell}(\psi(u),p) = 0 \leq R_L(u,p)$ immediately.
  If $\psi(u) \not\in \gamma(p)$, then
  \begin{align*}
    R_{\ell}(\psi(u),p)
    &\leq C_p \\
    &=    \alpha_p \cdot B_p  \\
    &\leq \alpha_p R_L(u,p) .
  \end{align*}
\end{proof}

\begin{lemma*}[Lemma~\ref{lemma:refines}]
  If $(L,\psi)$ indirectly elicits $\ell$, then $\Gamma = \prop{L}$ \emph{refines} $\gamma = \prop{\ell}$ in the sense that, for all $u \in \reals^d$, there exists $r \in \R$ such that $\Gamma_u \subseteq \gamma_r$.
\end{lemma*}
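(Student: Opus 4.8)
The plan is to unpack the two relevant definitions and observe that the conclusion falls out immediately, with $r$ taken to be $\psi(u)$. First I would recall that, by definition of the level set, $\Gamma_u = \{p \in \simplex : u \in \Gamma(p)\} = \{p \in \simplex : u \in \prop{L}(p)\}$, i.e. the set of conditional distributions for which the surrogate report $u$ minimizes expected surrogate loss. Then I would fix an arbitrary $u \in \reals^d$ and an arbitrary $p \in \Gamma_u$, so that $u \in \prop{L}(p)$.

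Next I would invoke the hypothesis that $(L,\psi)$ indirectly elicits $\gamma = \prop{\ell}$, in the form $u \in \prop{L}(p) \implies \psi(u) \in \gamma(p)$ (equivalently, $0 \notin B_{L,\psi,\ell}(p)$). Applied to our $p$, this yields $\psi(u) \in \gamma(p)$, which by definition of the target level set means exactly $p \in \gamma_{\psi(u)}$. Since $p \in \Gamma_u$ was arbitrary, this shows $\Gamma_u \subseteq \gamma_{\psi(u)}$, so $r = \psi(u)$ witnesses the refinement property. The only degenerate case is $\Gamma_u = \emptyset$ (when $u$ is optimal for no distribution), which is vacuously handled since the empty set is contained in any $\gamma_r$.

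There is essentially no obstacle here: the statement is a direct rephrasing of indirect elicitation in terms of level sets. The one point requiring minor care is to use the implication in the correct direction (surrogate-optimality of $u$ at $p$ forces $\psi(u)$ to be target-optimal at $p$, not the converse), and to note that the choice of $r$ can and should depend on $u$ — specifically $r = \psi(u)$ — rather than being uniform over the simplex.
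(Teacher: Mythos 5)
Your proof is correct and is essentially the same as the paper's: both take $r = \psi(u)$ and apply the indirect elicitation implication $u \in \Gamma(p) \implies \psi(u) \in \gamma(p)$ pointwise over $p \in \Gamma_u$ to conclude $\Gamma_u \subseteq \gamma_{\psi(u)}$. Your added remarks about the empty level set and the direction of the implication are fine but not needed beyond the core argument.
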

\begin{proof}
  For any $u$, let $r = \psi(u)$.
  By indirect elicitation, $u \in \Gamma(p) \implies r \in \gamma(p)$.
  So $\Gamma_u = \{p ~:~ u \in \Gamma(p)\} \subseteq \{p ~:~ r \in \gamma(p)\} = \gamma_r$.
\end{proof}

\begin{lemma*}[Lemma~\ref{lemma:linear-on-levelset}]
  Suppose $(L,\psi)$ indirectly elicits $\ell$ and let $\Gamma = \prop{L}$.
  Then for any fixed $u,u^* \in \reals^d$ and $r \in \R$, the functions $R_L(u,\cdot)$ and $R_{\ell}(r,\cdot)$ are linear in their second arguments on $\Gamma_{u^*}$.
\end{lemma*}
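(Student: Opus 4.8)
The plan is to show that both $R_L(u,\cdot)$ and $R_\ell(r,\cdot)$ are restrictions of affine functions of $p$ to the convex set $\Gamma_{u^*}$, which will make them linear (in the affine sense) on that set. The two pieces $\inprod{p}{L(u)}$ and $\inprod{p}{\ell(r)}$ are already linear in $p$ everywhere, so the entire content of the lemma is that the Bayes risks $\risk{L}(p)$ and $\risk{\ell}(p)$ agree with a single affine function of $p$ when restricted to $\Gamma_{u^*}$. For the surrogate Bayes risk this is immediate: by definition of the level set, every $p \in \Gamma_{u^*}$ has $u^* \in \Gamma(p) = \prop{L}(p)$, i.e. $u^*$ minimizes $\inprod{p}{L(\cdot)}$, so $\risk{L}(p) = \inprod{p}{L(u^*)}$ for all such $p$. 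Thus on $\Gamma_{u^*}$ we have $R_L(u,p) = \inprod{p}{L(u) - L(u^*)}$, which is manifestly linear in $p$.

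The target side requires one more step, and this is where indirect elicitation enters. By Lemma~\ref{lemma:refines}, $\Gamma_{u^*} \subseteq \gamma_{r^*}$ for $r^* = \psi(u^*)$, so for every $p \in \Gamma_{u^*}$ we have $r^* \in \gamma(p) = \prop{\ell}(p)$, meaning $r^*$ minimizes $\inprod{p}{\ell(\cdot)}$ and hence $\risk{\ell}(p) = \inprod{p}{\ell(r^*)}$ on all of $\Gamma_{u^*}$. Therefore $R_\ell(r,p) = \inprod{p}{\ell(r) - \ell(r^*)}$ for $p \in \Gamma_{u^*}$, again linear in $p$. Combining the two displays gives the claim. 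I should remark that ``linear'' here should be read as ``affine,'' i.e. the restriction of a function of the form $p \mapsto \inprod{p}{v}$; in particular, if $p$ is written as a convex combination $p = \sum_i \lambda_i q_i$ of points $q_i \in \Gamma_{u^*}$, then $R_L(u,p) = \sum_i \lambda_i R_L(u,q_i)$ and likewise for $R_\ell$, which is exactly the form in which the lemma is used in the proof of Theorem~\ref{thm:main-upper-details}.

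There is essentially no obstacle here; the only subtlety to be careful about is the case where $\Gamma_{u^*}$ is empty, in which case the statement holds vacuously, so I would dispatch that at the start. One should also note that the lemma is stated for arbitrary $u$ and $r$ (not tied to $u^*$), which is fine: the argument above never used any relationship between $u$ (resp.\ $r$) and $u^*$ beyond the fixed choice of base point $u^*$ defining the level set. The work is entirely in observing that a level set pins down a single minimizer whose objective value then serves as the affine formula for the Bayes risk across that whole set.
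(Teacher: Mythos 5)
Your proof is correct and follows essentially the same route as the paper's: both establish $\risk{L}(p) = \inprod{p}{L(u^*)}$ on $\Gamma_{u^*}$ directly from the definition of the level set, and both invoke Lemma~\ref{lemma:refines} to obtain $\Gamma_{u^*} \subseteq \gamma_{r^*}$ and thereby pin down a single affine formula for $\risk{\ell}$ on that set. Your added remarks (the affine/linear reading, the convex-combination identity, the vacuous empty case) are accurate clarifications rather than a different argument.
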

\begin{proof}
  Let $u^* \in \reals^d$ and $p \in \Gamma_{u^*}$.
  By definition, for all $p \in \Gamma_{u^*}$, $\risk{L}(p) = \inprod{p}{L(u^*)}$.
  So for fixed $u$,
    \[ R_L(u,p) = \inprod{p}{L(u)} - \inprod{p}{L(u^*)} = \inprod{p}{L(u) - L(u^*)} , \]
  a linear function of $p$ on $\Gamma_{u^*}$.
  Next, by Lemma~\ref{lemma:refines}, there exists $r^*$ such that $\Gamma_{u^*} \subseteq \gamma_{r^*}$.
  By the same argument, for fixed $r$, $R_{\ell}(r,p) = \inprod{p}{\ell(r) - \ell(r^*)}$, a linear function of $p$ on $\gamma_{r^*}$ and thus on $\Gamma_{u^*}$.
\end{proof}

\begin{lemma*}[Lemma~\ref{lemma:polyhedral-finite}]
  If $L: \reals^d \to \reals_+^{\Y}$ is polyhedral, then $\Gamma = \prop{L}$ has a finite set of level sets that union to $\simplex$.
  Moreover, these level sets are polytopes.
\end{lemma*}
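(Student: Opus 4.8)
The plan is to pass from $L$ to its \emph{superprediction set} $\mathcal{S} := \{z\in\reals^\Y : \exists\,u\in\reals^d \text{ with } z_y \geq L(u)_y \text{ for all } y\in\Y\}$ and to read off the structure of $\Gamma=\prop{L}$ from the finite face lattice of $\mathcal{S}$. First I would check that $\mathcal{S}$ is a pointed polyhedron. The set $E:=\{(u,z):z_y\geq L(u)_y\ \forall y\in\Y\}$ is cut out by finitely many affine inequalities in $(u,z)$---polyhedrality of $L(\cdot)_y$ means $z_y\geq L(u)_y$ is equivalent to finitely many inequalities $z_y\geq\langle a,u\rangle+b$---so $E$ is a polyhedron; $\mathcal{S}$ is its image under the projection $(u,z)\mapsto z$, hence a polyhedron by Fourier--Motzkin elimination. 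Moreover $z\geq L(u)\geq 0$ forces $\mathcal{S}\subseteq\reals^\Y_{\geq 0}$, so $\mathcal{S}$ contains no line; being a polyhedron it has finitely many faces, and being pointed it has finitely many vertices with $\mathcal{S}=\conv(\mathrm{vert}\,\mathcal{S})+\mathrm{rec}\,\mathcal{S}$.

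Next I would tie $\Gamma$ to this structure. For $p\in\simplex$ one has $\risk{L}(p)=\inf_u\inprod{p}{L(u)}=\inf_{z\in\mathcal{S}}\inprod{p}{z}$, where the direction $\geq$ uses $p\geq 0$ together with $z\geq L(u)$ for a witness $u$. Since $\inprod{p}{\cdot}\geq 0$ on $\mathcal{S}$, this infimum is finite and attained, the minimizing set $\Phi(p):=\argmin_{z\in\mathcal{S}}\inprod{p}{z}$ is a nonempty face of $\mathcal{S}$, and $\risk{L}(p)=\min_{v\in\mathrm{vert}\,\mathcal{S}}\inprod{p}{v}$. Two consequences follow. (i) For any $u$, $\Gamma_u=\{p\in\simplex:\inprod{p}{L(u)}\leq\inprod{p}{v}\text{ for all }v\in\mathrm{vert}\,\mathcal{S}\}$ is a finite intersection of halfspaces with $\simplex$, hence a polytope. (ii) $\Gamma(p)=\{u:\inprod{p}{L(u)}=\risk{L}(p)\}=\{u:L(u)\in\Phi(p)\}$, and this set is nonempty: any $z^{*}\in\Phi(p)$ has a witness $u^{*}$ with $L(u^{*})\leq z^{*}$, so $\inprod{p}{L(u^{*})}\leq\inprod{p}{z^{*}}=\risk{L}(p)$, which together with $\inprod{p}{L(u^{*})}\geq\risk{L}(p)$ gives $u^{*}\in\Gamma(p)$.

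It remains to extract a finite cover. By (ii), every value of $\Gamma$ is of the form $V_G:=\{u:L(u)\in G\}$ for a face $G$ of $\mathcal{S}$ with $V_G\neq\emptyset$, and there are only finitely many faces; in fact each level set $\Gamma_u$ depends only on the minimal face of $\mathcal{S}$ through $L(u)$, so there are finitely many level sets altogether. Choosing one representative $u_G\in V_G$ for each such face and letting $U$ be the resulting finite set, for any $p\in\simplex$ the face $G=\Phi(p)$ gives $u_G\in V_G=\Gamma(p)$, i.e. $p\in\Gamma_{u_G}$; hence $\bigcup_{u\in U}\Gamma_u=\simplex$, and by (i) each such $\Gamma_u$ is a polytope. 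The main obstacle is precisely step (ii): for a general convex surrogate $\Gamma(p)$ can be empty (as for exponential loss), so there is no reason to expect finitely many level sets covering $\simplex$; what rescues us is that for polyhedral $L$ the set $\mathcal{S}$ is a polyhedron lying in the nonnegative orthant, so the linear program $\min_{z\in\mathcal{S}}\inprod{p}{z}$ always attains its optimum on one of finitely many faces.
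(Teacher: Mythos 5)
Your proof is correct, but it takes a more self-contained route than the paper. The paper gets the key ingredient by citation: it invokes Lemma~5 of \citet{finocchiaro2019embedding} to assert that $\risk{L}$ is polyhedral, i.e.\ that there is a finite set $U \subset \reals^d$ with $\risk{L}(p) = \min_{u \in U}\inprod{p}{L(u)}$, then cites power-diagram theory for the fact that each $\Gamma_u$, $u \in U$, is a polytope, and finishes with the covering argument. You instead derive the finite representation from scratch: the superprediction set $\mathcal{S}$ is a polyhedron (projection of the finitely constrained epigraph-type set $E$ via Fourier--Motzkin), it is pointed because it sits in the nonnegative orthant, and LP theory then gives $\risk{L}(p) = \min_{v \in \mathrm{vert}\,\mathcal{S}}\inprod{p}{v}$ with attainment, so your vertex set plays exactly the role of the paper's finite $U$ and your halfspace description of $\Gamma_u$ replaces the power-diagram citation. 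Your face-based bookkeeping also makes explicit two points the paper leaves implicit in the cited representation: that $\Gamma(p) \neq \emptyset$ for every $p \in \simplex$ (via the witness $u^*$ with $L(u^*) \le z^* \in \Phi(p)$), which is what makes the finite cover possible, and that there are only finitely many distinct level sets, indexed by faces of $\mathcal{S}$. The trade-off is length versus transparency: the paper's proof is shorter but leans on external results, while yours is elementary, identifies exactly where polyhedrality is used (attainment of the LP on one of finitely many faces, which fails for, e.g., exponential loss), and would also be the natural starting point for the quantitative refinements in Section~\ref{sec:constant}.
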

\begin{proof}
  This statement can be deduced from the embedding framework of \cite{finocchiaro2019embedding}.
  In particular, Lemma 5 of \cite{finocchiaro2019embedding} states that if $L$ is polyhedral, then its Bayes risk $\risk{L}$ is concave polyhedral, i.e. is the pointwise minimum of a finite set of affine functions.
  It follows that there exists a finite set $U \subset \reals^d$ such that
  \begin{equation} \label{eqn:bayes-risk-if-poly}
    \risk{L}(p) = \min_{u \in \reals^d} \inprod{p}{L(u)} = \min_{u \in U} \inprod{p}{L(u)} ~.
  \end{equation}
  We claim the level sets of $U$ witness the claim.
  First, it is known (e.g. from theory of power diagrams, \cite{aurenhammer1987criterion}) that if $\risk{L}$ is a polyhedral function represented as (\ref{eqn:bayes-risk-if-poly}) and $u \in U$, then $\Gamma_u = \{p \in \simplex: \inprod{p}{L(u)} = \risk{L}(p)\}$ is a polytope.
  Finally, suppose for contradiction that there exists $p \in \simplex$, $p \not\in \cup_{u \in U} \Gamma_u$.
  Then there must be some $u' \not\in U$ with $p \in \Gamma_{u'}$, implying that $\inprod{p}{L(u')} > \max_{u \in U} \inprod{p}{L(u)}$, contradicting (\ref{eqn:bayes-risk-if-poly}).
\end{proof}

\begin{theorem*}[Theorem~\ref{thm:main-upper-details}]
  Suppose the surrogate loss $L: \reals^d \to \reals_+^{\Y}$ and link $\psi: \reals^d \to \R$ are consistent for the target loss $\ell: \R \to \reals_+^{\Y}$.
  If $L$ is polyhedral, then $(L,\psi)$ guarantee a linear regret transfer for $\ell$, i.e. there exists $\alpha \geq 0$ such that, for all $\D$ and all measurable $h: \X \to \reals^d$,
    \[ R_{\ell}(\psi \circ h ; \D) \leq \alpha R_L(h ; \D) . \]
\end{theorem*}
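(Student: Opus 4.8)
The plan is to reduce to the conditional setting via Observation~\ref{obs:transfer} and then exploit the finite polyhedral structure of $\Gamma = \prop{L}$. Since consistency implies calibration, which implies indirect elicitation (Fact~\ref{fact:consistent-calibrated-elicits}), Lemma~\ref{lemma:fixed-p} together with Lemmas~\ref{lemma:refines}--\ref{lemma:linear-on-levelset} are all available. The concrete target is a single finite constant $\alpha \geq 0$ with $R_\ell(\psi(u),p) \leq \alpha R_L(u,p)$ for \emph{all} $u \in \reals^d$ and \emph{all} $p \in \simplex$. Once we have this, $\zeta(t) = \alpha t$ is concave, continuous, and vanishes at $0$, so Observation~\ref{obs:transfer} immediately upgrades it to the stated bound $R_\ell(\psi \circ h;\D) \leq \alpha R_L(h;\D)$ for all $\D$ and measurable $h$.

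First I would invoke Lemma~\ref{lemma:polyhedral-finite} to fix a finite $U \subset \reals^d$ whose level sets $\{\Gamma_{u^*} : u^* \in U\}$ are polytopes covering $\simplex$. Let $\mathcal{Q}$ be the (finite) set of all vertices of all these polytopes, and apply Lemma~\ref{lemma:fixed-p} at each $q \in \mathcal{Q}$ to obtain constants $\alpha_q \geq 0$; set $\alpha \defeq \max_{q \in \mathcal{Q}} \alpha_q < \infty$. Note the naive hope of taking $\alpha = \sup_{p \in \simplex} \alpha_p$ fails --- that supremum is typically infinite --- so the real content of the argument is that controlling $\alpha_p$ at the finitely many vertices is enough.

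The crux is the interpolation step. Given arbitrary $u \in \reals^d$ and $p \in \simplex$, pick $u^* \in U$ with $p \in \Gamma_{u^*}$, which is possible since the $\Gamma_{u^*}$ cover $\simplex$. Since $\Gamma_{u^*}$ is a bounded polyhedron, it is the convex hull of its finitely many vertices, so write $p = \sum_i \lambda_i q_i$ with $q_i \in \mathcal{Q}$, $\lambda_i \geq 0$, $\sum_i \lambda_i = 1$. By Lemma~\ref{lemma:linear-on-levelset}, both $R_L(u,\cdot)$ and $R_\ell(\psi(u),\cdot)$ are linear on $\Gamma_{u^*}$, hence
\begin{align*}
  R_\ell(\psi(u),p) = \sum_i \lambda_i R_\ell(\psi(u),q_i)
  &\leq \sum_i \lambda_i \alpha_{q_i} R_L(u,q_i) \\
  &\leq \alpha \sum_i \lambda_i R_L(u,q_i) = \alpha R_L(u,p),
\end{align*}
where the first inequality is Lemma~\ref{lemma:fixed-p} applied at each $q_i$ and the final equality is again linearity on $\Gamma_{u^*}$. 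This gives the desired conditional bound for every pair $(u,p)$, and Observation~\ref{obs:transfer} finishes.

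I expect the main obstacle to be stating the two linearity invocations carefully. Linearity of $R_L(u,\cdot)$ on $\Gamma_{u^*}$ holds for \emph{any} fixed prediction $u$ because $\risk{L}$ restricted to $\Gamma_{u^*}$ agrees with the affine map $p \mapsto \inprod{p}{L(u^*)}$, so $R_L(u,p) = \inprod{p}{L(u) - L(u^*)}$ there; linearity of $R_\ell(\psi(u),\cdot)$ relies on the refinement $\Gamma_{u^*} \subseteq \gamma_{r^*}$ from Lemma~\ref{lemma:refines}, which forces $\risk{\ell}$ to be affine on $\Gamma_{u^*}$ as well. One should also record that lower-dimensional level sets cause no difficulty, since a polytope of any dimension is still the convex hull of its vertices, and that consistency (via polyhedrality of $L$, as in the proof of Lemma~\ref{lemma:polyhedral-finite}) guarantees $\Gamma(p) \neq \emptyset$ for every $p$, so the constructions above are well-defined.
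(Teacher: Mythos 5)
Your proposal is correct and follows essentially the same route as the paper's proof: reduce to the conditional setting via Observation~\ref{obs:transfer}, use Lemma~\ref{lemma:polyhedral-finite} to obtain finitely many polytope level sets covering $\simplex$, take $\alpha = \max_{q}\alpha_q$ over their vertices via Lemma~\ref{lemma:fixed-p}, and interpolate to general $p$ using the linearity of both regrets on each level set from Lemma~\ref{lemma:linear-on-levelset}. The interpolation computation matches the paper's line for line, so there is nothing to add.
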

\begin{proof}
  We first recall that by Fact~\ref{fact:consistent-calibrated-elicits}, consistency implies that $(L,\psi)$ are calibrated for $\ell$ and that $(L,\psi)$ indirectly elicit $\ell$.
  Next, by Observation~\ref{obs:transfer}, it suffices to show a linear \emph{conditional} regret transfer, i.e. for all $p \in \simplex$ and $u \in \reals^d$, we show $R_{\ell}(\psi(u),p) \leq \alpha R_L(u,p)$.
  
  By Lemma~\ref{lemma:polyhedral-finite}, the polyhedral loss $L$ has a finite set $U \subset \reals^d$ of predictions such that (a) for each $u \in U$, the level set $\Gamma_u$ is a polytope, and (b) $\cup_{u \in U} \Gamma_u = \simplex$.
  Let $\mathcal{Q}_u \subset \simplex$ be the finite set of vertices of the polytope $\Gamma_u$, and define the finite set $\mathcal{Q} = \cup_{u \in U} \mathcal{Q}_u$.
  
  By Lemma~\ref{lemma:fixed-p}, for each $q \in \mathcal{Q}$, there exists $\alpha_q \geq 0$ such that $R_{\ell}(\psi(u),q) \leq \alpha_q R_L(u,q)$ for all $u$.
  We choose
    \[ \alpha = \max_{q \in \mathcal{Q}} \alpha_q . \]
  To prove the conditional regret transfer, consider any $p \in \simplex$ and any $u \in \reals^d$.
  There exists $u \in U$ such that $p \in \Gamma_u$, a polytope.
  So we can write $p$ as a convex combination of its vertices, i.e.
    \[ p = \sum_{q \in \mathcal{Q}_u} \beta(q) q \]
  for some probability distribution $\beta$.
  Recall that $\mathcal{Q}_u \subseteq \Gamma_u$ and $R_L$ and $R_{\ell}$ are linear in $p$ on $\Gamma_u$ by Lemma~\ref{lemma:linear-on-levelset}.
  So, for any $u'$:
  \begin{align*}
    R_{\ell}(\psi(u'),p)
    &=    R_{\ell}\left(\psi(u') ~,~ \sum_{q \in \mathcal{Q}_u} \beta(q) q\right)  \\
    &=    \sum_{q \in \mathcal{Q}_u} \beta(q) R_{\ell}(\psi(u'),q)  \\
    &\leq \sum_{q \in \mathcal{Q}_u} \beta(q) \alpha_{q} R_L(u',q)  \\
    &\leq \alpha \sum_{q \in \mathcal{Q}_u} \beta(q) R_L(u',q)  \\
    &=    \alpha R_L(u', p) .
  \end{align*}
\end{proof}

\section{Omitted Proofs: Lower Bound} \label{app:lower}
This section contains omitted proofs from Section~\ref{sec:lower}.

\begin{theorem*}[Theorem~\ref{thm:main-lower-details}]
  Suppose the surrogate loss $L$ and link $\psi$ satisfy a regret transfer of $\zeta$ for a target loss $\ell$.
  If $L$, $\psi$, and $\ell$ satisfy Assumption~\ref{assumption:lower}, then there exists $c > 0$ such that, for some $\epsilon^* > 0$, for all $0 \leq \epsilon < \epsilon^*$, $\zeta(\epsilon) \geq c \sqrt{\epsilon}$.
\end{theorem*}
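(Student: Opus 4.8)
The plan is to instantiate the sketch already given in the text: fix a boundary report $u_0$ with associated distribution $p_0 \in \Gamma_{u_0} \cap \gamma_r \cap \gamma_{r'}$ for distinct target reports $r,r'$, and push a one-parameter family $p_\lambda \to p_0$ along which the target regret of a fixed link value decays linearly in $\lambda$ while the surrogate regret decays only quadratically. Since $r \neq r'$ and $\psi(u_0)$ can equal at most one of them, without loss of generality $\psi(u_0) \neq r'$; then, since $p_0 \in \gamma_{r'}$, I would choose $p_\lambda$ moving from $p_0$ in a direction that makes $r'$ strictly better than $\psi(u_0)$ for the target loss. Concretely, pick $p_\lambda = p_0 + \lambda v$ for a suitable fixed direction $v$ (staying in the simplex for small $\lambda > 0$); linearity of $R_\ell(\psi(u_0),\cdot)$ near $p_0$ (it is piecewise linear, and on the relevant side equals $\inprod{p}{\ell(\psi(u_0)) - \ell(r')}$ plus lower-order terms once we leave $\gamma_{r'}$) gives $R_\ell(\psi(u_0), p_\lambda) \geq c_1 \lambda$ for some $c_1 > 0$ and all small $\lambda$.

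\textbf{Key steps, in order.} (1) Extract the boundary report $u_0$, the pair $r,r'$, and $p_0$ from Assumption~\ref{assumption:lower}, and fix a direction $v$ with $\inprod{v}{\ell(r') - \ell(\psi(u_0))} < 0$, so that target regret at $\psi(u_0)$ grows linearly as $p_\lambda := p_0 + \lambda v$ moves away from $p_0$; establish $R_\ell(\psi(u_0),p_\lambda) \geq c_1\lambda$. (2) Bound the surrogate regret $R_L(u_0,p_\lambda)$ from above by $O(\lambda^2)$. Write $R_L(u_0,p_\lambda) = \inprod{p_\lambda}{L(u_0)} - \risk{L}(p_\lambda) = \inprod{p_0}{L(u_0)} - \risk{L}(p_\lambda) + \lambda\inprod{v}{L(u_0)}$, and use that $\risk{L}(p_0) = \inprod{p_0}{L(u_0)}$ (since $u_0 \in \Gamma(p_0)$) together with a first-order expansion of $\risk{L}$ at $p_0$: the envelope theorem gives $\nabla \risk{L}(p_0) = L(u_0)$ when $u_0$ is the (unique enough) minimizer, so the linear terms cancel and we are left with the second-order remainder. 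Controlling that remainder is exactly where strong smoothness enters: I would show $\risk{L}$ has a Lipschitz gradient near $p_0$, hence $\risk{L}(p_\lambda) \geq \inprod{p_0}{L(u_0)} + \lambda\inprod{v}{L(u_0)} - c_2\lambda^2$, giving $R_L(u_0,p_\lambda) \leq c_2\lambda^2$. (3) Chain the two bounds: setting $\epsilon = R_L(u_0,p_\lambda) \leq c_2\lambda^2$, so $\lambda \geq \sqrt{\epsilon/c_2}$, the regret transfer hypothesis $R_\ell(\psi(u_0),p_\lambda) \leq \zeta(R_L(u_0,p_\lambda))$ yields $\zeta(\epsilon) \geq \zeta(R_L(u_0,p_\lambda)) \geq R_\ell(\psi(u_0),p_\lambda) \geq c_1\lambda \geq c_1\sqrt{\epsilon/c_2}$, wait—care is needed since $\zeta$ need not be monotone, so instead I would directly note that for each small $\lambda$ the pair $(\epsilon_\lambda, \text{target regret})$ satisfies $\zeta(\epsilon_\lambda) \geq c_1\lambda \geq (c_1/\sqrt{c_2})\sqrt{\epsilon_\lambda}$, and then argue these $\epsilon_\lambda$ fill an interval $[0,\epsilon^*)$ because $\lambda \mapsto R_L(u_0,p_\lambda)$ is continuous with value $0$ at $\lambda = 0$; hence $\zeta(\epsilon) \geq c\sqrt{\epsilon}$ for all $\epsilon$ in that interval with $c = c_1/\sqrt{c_2}$.

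\textbf{Main obstacle.} The delicate point is step~(2): justifying the second-order upper bound on $R_L(u_0,p_\lambda)$ without assuming $\risk{L}$ is twice differentiable. The clean route is the variational one—$R_L(u_0,p_\lambda) = \inprod{p_\lambda}{L(u_0) - L(u_\lambda^*)} - (\risk{L}(p_\lambda) - \inprod{p_\lambda}{L(u_\lambda^*)}) $, hmm, more simply: let $u_\lambda^* \in \Gamma(p_\lambda)$; then $R_L(u_0,p_\lambda) = \inprod{p_\lambda}{L(u_0)} - \inprod{p_\lambda}{L(u_\lambda^*)}$, and I would compare to $\inprod{p_0}{L(u_0)} - \inprod{p_0}{L(u_\lambda^*)} \geq 0$ (since $u_0$ is optimal at $p_0$), deducing $R_L(u_0,p_\lambda) \leq \lambda\inprod{v}{L(u_0) - L(u_\lambda^*)} \leq \lambda \|L(u_0)-L(u_\lambda^*)\|\,\|v\|$. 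So it suffices to show $\|u_\lambda^* - u_0\| = O(\lambda)$ (then local Lipschitzness of $L$ finishes it), and this is precisely where $\alpha$-strong convexity of $u \mapsto \inprod{p}{L(u)}$ on a neighborhood of $u_0$ is used: strong convexity at $p_0$ plus a perturbation argument (the minimizer of a strongly convex function perturbed linearly moves at most linearly in the perturbation) gives $\|u_\lambda^* - u_0\| \leq (\|v\|/\alpha)\,\lambda \cdot \sup_y|L(u)_y|$-type control, modulo checking $u_\lambda^*$ stays in the neighborhood for small $\lambda$, which follows by continuity. Assembling this perturbation estimate carefully—handling the possibility that $\Gamma(p_\lambda)$ is not a singleton, and that strong convexity is only local—is the part that will require the most care; everything else is bookkeeping.
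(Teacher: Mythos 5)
Your outline matches the paper's proof almost step for step: the same family $p_\lambda \to p_0$, linear growth of target regret via the level-set geometry of $\gamma$, quadratic decay of surrogate regret via a strong-convexity perturbation bound on $\|u_\lambda^* - u_0\|$, and the same inversion via continuity of $\lambda \mapsto R_L(u_0,p_\lambda)$ to cover an interval $[0,\epsilon^*)$ (including the correct handling of non-monotone $\zeta$). Your variational route in the ``main obstacle'' paragraph --- bounding $R_L(u_0,p_\lambda) \leq \lambda\inprod{v}{L(u_0)-L(u_\lambda^*)}$ by comparing against optimality at $p_0$, then using local Lipschitzness of $L$ --- is a valid and slightly more elementary substitute for the paper's final step, which instead uses $\beta$-smoothness of $L_\lambda$ and $\nabla L_\lambda(u_\lambda)=0$ to get $L_\lambda(u_0)-L_\lambda(u_\lambda)\leq \tfrac{\beta}{2}\|u_0-u_\lambda\|^2$; both yield $O(\lambda^2)$ once $\|u_\lambda^*-u_0\|=O(\lambda)$ is in hand.

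The one step whose proposed justification would fail is the claim that $u_\lambda^*$ exists and ``stays in the neighborhood for small $\lambda$, which follows by continuity.'' It does not: $L$ is only \emph{locally} strongly convex, the minimizer correspondence $p \mapsto \Gamma(p)$ is not automatically continuous, and $\Gamma(p_\lambda)$ could a priori be empty or contain points far from $u_0$ (the paper itself notes $\prop{\exploss}(0)=\emptyset$ for exponential loss). The paper replaces this with a quantitative comparison: for every $u \notin B_\delta(u_0)$, strong convexity on the ball forces $L_0(u) - L_0(u_0) \geq \tfrac{\alpha}{2}\delta^2$, so $L_\lambda(u) - L_\lambda(u_0) \geq (1-\lambda)\tfrac{\alpha}{2}\delta^2 + \lambda\bigl(L_1(u_1)-L_1(u_0)\bigr) > 0$ once $\lambda$ is below an explicit threshold; strict convexity on the compact ball then gives existence and uniqueness of the global minimizer inside $B_\delta(u_0)$. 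You need this (or an equivalent localization argument) before your perturbation estimate can be applied, since that estimate uses strong convexity at $u_\lambda^*$, which is only available inside the neighborhood. Two smaller points: the constant in your perturbation bound should involve a local Lipschitz constant of $u \mapsto \inprod{v}{L(u)}$ (a gradient bound), not $\sup_y |L(u)_y|$; and the existence of a feasible direction $v$ making $\psi(u_0)$ strictly suboptimal needs the non-redundancy assumption on $\ell$, which the paper invokes explicitly by taking $p_1$ in the relative interior of $\gamma_r$.
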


\emph{Proof outline:} By assumption we have a boundary report $u_0$ which is $L$-optimal for a distribution $p_0$.
We have some $r,r'$ which are both optimal for $p_0$, and $\psi(u_0) = r'$.
First, we will choose a $p_1$ where $r$ is uniquely optimal, hence $u_0$ is a strictly suboptimal choice.
We then consider a sequence of distributions $p_{\lambda} = (1-\lambda) p_0 + \lambda p_1$, approaching $p_0$ as $\lambda \to 0$.
For all such $p_{\lambda}$, it will happen that $r$ is optimal while $u_0$ and $r' = \psi(u_0)$ are strictly suboptimal.
We show that $R_{\ell}(r', p_{\lambda}) = c_{\ell} \lambda$ for some constant $c_{\ell}$ and all small enough $\lambda$.
Meanwhile, we will show that $R_L(u_0, p_{\lambda}) \leq O(\lambda^2)$, proving the result.
The last fact will use the properties of strong smoothness and strong convexity in a neighborhood of $u_0$.

\begin{proof}
  Obtain $\alpha, u_0, p_0, r, r'$, and an open neighborhood of $u_0$ from Assumption~\ref{assumption:lower} and the definition of boundary report.
  Assume without loss of generality that $\psi(u_0) = r'$; otherwise, swap the roles of $r$ and $r'$.

  \paragraph{Linearity of $R_{\ell}(r',p_{\lambda})$.}
  As $\ell$ is non-redundant by assumption, there exists some $p_1 \in \inter{\gamma_r}$, the relative interior of the full-dimensional level set $\gamma_r$.
  We therefore have $R_\ell(r',p_1) = \inprod{p_1}{\ell(r')-\ell(r)} =: c_\ell > 0$, and $R_\ell(r',p_0) = 0$.
  Let $p_\lambda := (1-\lambda) p_0 + \lambda p_1$.
  By convexity of $\gamma_r$, we have $p_\lambda \in \gamma_r$ for all $\lambda \in [0,1]$, which gives $R_\ell(r',p_\lambda) = \lambda c_\ell$.

  \paragraph{Obtaining the global minimizer $u_{\lambda}$ of $L_{\lambda}$.}
  Let $L_\lambda:\reals^d\to\reals_+$ be given by $L_\lambda(u) = \inprod{p_\lambda}{L(u)} = (1-\lambda) \inprod{p_0}{L(u)} + \lambda \inprod{p_1}{L(u)}$.
  Let $\delta >0$ such that the above open neighborhood of $u_0$ contains the Euclidean ball $B_\delta(u_0)$ of radius $\delta$ around $u_0$.
  Let $u_1 \in \Gamma(p_1)$.
  We argue that for all small enough $\lambda$, $L_{\lambda}(u)$ is uniquely minimized by some $u_{\lambda} \in B_{\delta}(u_0)$.
  For any $u\notin B_\delta(u_0)$, we have, using local strong convexity and the optimality of $u_1$,
  \begin{align*}
    L_\lambda(u) - L_\lambda(u_0)
    &=
      (1-\lambda) \left( L_0(u) - L_0(u_0) \right)
      + \lambda \left( L_1(u) - L_1(u_0) \right)
    \\
    &\geq
      (1-\lambda) \left( \frac \alpha 2 \delta^2 \right)
      + \lambda \left( L_1(u_1) - L_1(u_0) \right)~  \\
    &> 0
  \end{align*}
  if $\lambda < \lambda^* := \alpha \delta^2 / (2 \alpha \delta^2 + 4 L_1(u_0) - 4 L_1(u_1))$.
  For the remainder of the proof, let $\lambda < \lambda^*$.
  Then any $u\notin B_{\delta}(u_0)$ has $L_{\lambda}(u) > L_{\lambda}(u_0)$, hence is suboptimal.
  By $\alpha$-strong convexity of $L_0$ on $B_\delta(u_0)$, $L_\lambda$ is strictly convex on $B_\delta(u_0)$.
  So it has a unique minimizer $u_{\lambda}$, and by the above argument this is the global minimizer of $L_{\lambda}$.
  Then $\risk{L}(p_\lambda) = L_\lambda(u_\lambda)$, and thus $R_L(u_0,p_\lambda) = L_\lambda(u_0) - L_\lambda(u_\lambda)$.
  We also observe here that $R_L(u_0,p_{\lambda})$ is continuous in $\lambda$, e.g. because the Bayes risk of $L$ is continuous in $p$ as is $\inprod{p}{L(u_0)}$.
  It is also zero when $\lambda = 0$.

  \paragraph{Showing $R_L$ is quadratic in $\lambda$.}
  By assumption, the gradient of $L_y$ is locally Lipschitz for all $y\in\Y$.
  We will apply this fact to the compact set $\mathcal C = \{u \in \reals^d : \|u - u_1\| \leq \|u_0 - u_1\| + \delta\}$.
  By compactness, we have a finite subcover of open neighborhoods; let $\beta$ be the minimum Lipschitz constant over this finite set of neighborhoods.
  We thus have that $L_y$ is $\beta$-strongly smooth on $\mathcal C$, and hence so is $L_\lambda$ for any $\lambda \in [0,1]$.
  
  We now upper bound $\|u_\lambda - u_0\|_2$, and then apply strong smoothness to upper bound $R_L(u_0,p_{\lambda}) = L_\lambda(u_0) - L_\lambda(u_\lambda)$.
  Consider the first-order optimality condition of $L_\lambda$:
  \begin{align*}
    \label{eq:first-order-opt-smooth}
    & 0 = \nabla L_\lambda(u_\lambda) = (1-\lambda) \nabla L_0(u_\lambda) + \lambda \nabla L_1(u_\lambda)
    \\
    & \implies (1-\lambda) \|\nabla L_0(u_\lambda)\|_2 = \lambda \|\nabla L_1(u_\lambda)\|_2~.
  \end{align*}
  By optimality of $u_0$ and $u_1$, strong convexity of $L_0$ and strong smoothness of $L_1$, and the triangle inequality, we have
  \begin{align*}
    \|\nabla L_0(u_\lambda)\|_2 &= \|\nabla L_0(u_\lambda) - \nabla L_0(u_0)\|_2 \geq \alpha \|u_\lambda - u_0\|_2~,
    \\
    \|\nabla L_1(u_\lambda)\|_2 &= \|\nabla L_1(u_\lambda) - \nabla L_1(u_1)\|_2 \leq \beta \|u_\lambda - u_1\|_2
    \\
    &\leq \beta \left( \|u_\lambda - u_0\|_2 + \|u_0 - u_1\|_2 \right)~.
  \end{align*}
  Combining,
  \begin{align*}
    (1-\lambda) \alpha \|u_\lambda - u_0\|_2
    &\leq
      (1-\lambda) \|\nabla L_0(u_\lambda)\|_2
    \\
    &= \lambda \|\nabla L_1(u_\lambda)\|_2
    \\
    &\leq
      \lambda \beta \left( \|u_\lambda - u_0\|_2 + \|u_0 - u_1\|_2 \right)~.
  \end{align*}
  Now rearranging and taking $\lambda \leq \tfrac 1 2 \tfrac {\alpha}{\alpha+\beta}$, we have
  \begin{align*}
    \|u_\lambda - u_0\|_2 \leq \frac{\lambda\beta}{(1-\lambda)\alpha-\lambda\beta} \|u_0 - u_1\|_2  \leq \lambda \frac{2\beta}{\alpha} \|u_0 - u_1\|_2 ~.
  \end{align*}
  Finally, from strong smoothness of $L_\lambda$ and optimality of $u_\lambda$,
  \begin{align*}
    L_\lambda(u_0) - L_\lambda(u_\lambda) \leq \frac{\beta}{2} \|u_0 - u_\lambda\|_2^2 \leq \frac{\beta}{2} \left(\lambda \frac{2\beta}{\alpha} \|u_0 - u_1\|_2\right)^2 = c_L \lambda^2~,
  \end{align*}
  where $c_L = \frac{2\beta^3}{\alpha^2} \|u_0 - u_1\|_2^2 > 0$.

  To conclude: we have found a $\lambda^* > 0$ and shown that for all $0 \leq \lambda < \lambda^*$, $R_\ell(r',p_\lambda) = c_\ell \lambda$ and $R_L(u_0,p_\lambda) \leq c_L \lambda^2$.
  In particular, let $\epsilon^* = \sup_{0 \leq \lambda < \lambda^*} R_L(u_0, p_{\lambda})$.
  Then for all $0 \leq \epsilon < \epsilon^*$,
  by continuity, we can choose $\lambda < \lambda^*$ such that $R_L(u_0, p_{\lambda}) = \epsilon \leq c_L \lambda^2$.
  Meanwhile, $R_{\ell}(\psi(u_0), p_{\lambda}) = c_{\ell} \lambda \geq \frac{c_{\ell}}{\sqrt{c_L}} \sqrt{\epsilon}$.
  Recalling that $\zeta(R_L(u_0, p_{\lambda})) \geq R_{\ell}(\psi(u_0), p_{\lambda})$ by definition,
  this implies $\zeta(\epsilon) \geq c \sqrt{\epsilon}$ for all $\epsilon < \epsilon^*$, with $c = \frac{c_{\ell}}{\sqrt{c_L}}$.
\end{proof}

\section{Omitted Proofs: Constant Derivation} \label{app:constant}
This section contains omitted proofs from Section~\ref{sec:constant}.

\subsection{Hoffman constants}
First we appeal to a known fact, the existence of Hoffman constants for systems of linear inequalities.
See \citet{zalinescu2003sharp} for a modern treatment.
\begin{theorem}[Hoffman constant \cite{hoffman1952approximate}]
  \label{thm:hoffman}
  Given a matrix $A\in\reals^{m\times n}$, there exists some smallest $H(A)\geq 0$, called the \emph{Hoffman constant} (with respect to $\|\cdot\|_\infty$), such that for all $b\in\reals^m$ and all $x\in\reals^n$,
  \begin{equation}
    \label{eq:hoffman}
    d_\infty(x,S(A,b)) \leq H(A) \|(A x - b)_+\|_\infty~,
  \end{equation}
  where $S(A,b) = \{x\in\reals^n \mid A x \leq b\}$ and $(u)_+ \defeq \max(u,0)$ component-wise.
\end{theorem}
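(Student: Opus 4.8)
The plan is to fix the matrix $A$ and bound the ratio $d_\infty(x_0, S(A,b)) \,/\, \|(Ax_0 - b)_+\|_\infty$ uniformly over all $b$ for which $S(A,b) \neq \emptyset$ and all $x_0 \notin S(A,b)$; the constant $H(A)$ is then exactly the supremum of this ratio, and the entire content of the theorem is that this supremum is finite. (When $x_0 \in S(A,b)$ the left side is $0$, so the inequality is automatic; the infeasible case is implicitly excluded, which is harmless since in every application here $S(A,b) = \Gamma(p)$ is nonempty.) Once finiteness is established, the set of valid constants is $[H^*, \infty)$ with $H^*$ the supremum above, so a smallest one exists.

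First I would express the distance as the value of a linear program. For fixed $x_0$,
\[
  d_\infty(x_0, S(A,b)) \;=\; \min_{x \in \reals^n,\ t \in \reals} \; t \quad\text{s.t.}\quad Ax \le b,\ \ x - t\mathbf{1} \le x_0,\ \ -x - t\mathbf{1} \le -x_0 .
\]
This primal LP is feasible (take any $\bar x \in S(A,b)$ and $t$ large) and bounded below by $0$, so strong LP duality applies. I would then write out the dual and simplify it: the multipliers attached to the two box constraints can be eliminated, collapsing the dual to
\[
  d_\infty(x_0, S(A,b)) \;=\; \max_{y \ge 0} \; (Ax_0 - b)^\top y \quad\text{s.t.}\quad \|A^\top y\|_1 \le 1 .
\]
The crucial observation is that the feasible region $P(A) \defeq \{\, y \ge 0 : \|A^\top y\|_1 \le 1 \,\}$ depends only on $A$, not on $b$ or $x_0$.

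Second, I would finish with a vertex argument. Since $d_\infty(x_0, S(A,b))$ is finite — this is where feasibility of $S(A,b)$ is used — the linear maximum above is attained at a vertex of $P(A)$; let $V(A)$ denote the finite vertex set of $P(A)$. For any $y \ge 0$ we have, termwise, $(Ax_0 - b)^\top y \le (Ax_0 - b)_+^\top y \le \|(Ax_0 - b)_+\|_\infty \,\|y\|_1$, hence
\[
  d_\infty(x_0, S(A,b)) \;\le\; \Big(\max_{y \in V(A)} \|y\|_1\Big)\, \|(Ax_0 - b)_+\|_\infty .
\]
Therefore the supremum of the ratio is at most $\max_{y \in V(A)} \|y\|_1 < \infty$, and setting $H(A)$ equal to that supremum yields the smallest valid constant.

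The main obstacle is bookkeeping rather than conceptual: correctly encoding the $\ell_\infty$ distance as an LP, taking the dual without sign errors, and simplifying it to the clean form above. A secondary subtlety is that $P(A)$ may be unbounded — precisely when $A^\top$ admits a nonzero nonnegative null vector — so one cannot simply bound $\sup_{y \in P(A)} \|y\|_1$; it is finiteness of the distance (i.e.\ feasibility of $S(A,b)$) that guarantees the maximum is attained at a vertex and keeps the argument alive. As an alternative to LP duality, one could take a nearest point $\bar x \in S(A,b)$ to $x_0$, restrict to the constraints active at $\bar x$, and reduce to the finitely many linearly independent subsystems of rows of $A$; I would fall back on this combinatorial route if the duality computation proved unwieldy.
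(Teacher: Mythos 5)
Your proof is correct, and it is worth noting that the paper itself offers \emph{no} proof of this statement: it is quoted as a known result of Hoffman, with a pointer to \citet{zalinescu2003sharp} for a modern treatment. So there is nothing internal to compare against; what you have supplied is a self-contained derivation, and it is one of the two standard ones. Your duality computation checks out: introducing multipliers $y\ge 0$ for $Ax\le b$ and $u,v\ge 0$ for the two box constraints, stationarity forces $\ones^\top(u+v)=1$ and $A^\top y = v-u$, and eliminating $(u,v)$ (which exist for a given $y$ iff $\|A^\top y\|_1\le 1$, since one can always pad $u$ and $v$ equally) yields exactly $\max\{(Ax_0-b)^\top y : y\in P(A)\}$ with objective $-b^\top y+(v-u)^\top x_0=y^\top(Ax_0-b)$. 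The one step you leave implicit is why a bounded linear maximum over $P(A)$ is attained at a vertex: $P(A)\subseteq\reals^m_{\ge 0}$ contains no line, hence is pointed and has a nonempty finite extreme-point set, and a linear functional bounded above on a pointed polyhedron attains its supremum at an extreme point; this is worth a sentence since, as you observe, $P(A)$ can be unbounded. Your caveat about infeasible systems is also a genuine (if minor) defect of the theorem \emph{as stated}: for $b$ with $S(A,b)=\emptyset$ the left side is $+\infty$ and no finite $H(A)$ works, so the quantifier ``for all $b$'' must be read as ranging over consistent systems — harmless here since the paper only applies it with $S(A,b)=\Gamma(p)\neq\emptyset$. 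Compared with the combinatorial fallback you sketch (Hoffman's original route via active, linearly independent subsystems at a nearest feasible point), the duality argument buys an explicit closed form for the constant, $H(A)\le\max_{y\in V(A)}\|y\|_1$, at the cost of the polyhedral bookkeeping for $\{z:\|z\|_1\le 1\}$ as an intersection of $2^n$ halfspaces; either route is acceptable.
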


\begin{lemma*}[Lemma~\ref{lemma:hoffman-polyhedral}]
  Let $L: \reals^d \to \reals_+^{\Y}$ be a polyhedral loss with $\Gamma = \prop{L}$.
  Then for any fixed $p$, there exists some smallest constant $H_{L,p} \geq 0$ such that $d_{\infty}(u,\Gamma(p)) \leq H_{L,p} R_L(u,p)$ for all $u \in \reals^d$.
\end{lemma*}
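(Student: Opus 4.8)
The plan is to reduce the statement to a direct application of the Hoffman bound (Theorem~\ref{thm:hoffman}) for systems of linear inequalities. The starting observation is that for fixed $p\in\simplex$ the expected-loss function $L_p(u)\defeq\inprod{p}{L(u)}$ is polyhedral — it is a nonnegative combination of the polyhedral coordinate functions $L(\cdot)_y$, and a nonnegative combination of polyhedral functions is polyhedral — and it is nonnegative since $L$ takes values in $\reals_+^\Y$. Hence I can write $L_p(u)=\max_{i=1,\dots,m}(\inprod{a_i}{u}+b_i)$ for finitely many affine pieces $(a_i,b_i)$. From here the strategy is: exhibit the optimal set $\Gamma(p)$ as the solution set of an explicit linear system, run Hoffman on that system, and verify that the residual it produces is exactly $R_L(u,p)$.

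Concretely, the first step is to note that $\risk{L}(p)=\inf_u L_p(u)$ is finite (because $L_p\geq 0$) and attained; for the latter I would invoke Lemma~\ref{lemma:polyhedral-finite}, which guarantees that $p$ lies in some level set $\Gamma_{u^*}$, i.e. $u^*\in\Gamma(p)$, so $\Gamma(p)\neq\emptyset$ and $d_\infty(u,\Gamma(p))$ is well defined. Since $\risk{L}(p)$ is the global minimum of $L_p$, the argmin set satisfies $\Gamma(p)=\{u:L_p(u)\leq\risk{L}(p)\}=\{u:\inprod{a_i}{u}\leq\risk{L}(p)-b_i,\ i=1,\dots,m\}$, which is the polyhedron $S(A,b')$ for $A$ the matrix with rows $a_1,\dots,a_m$ and $b'_i=\risk{L}(p)-b_i$. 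Applying Theorem~\ref{thm:hoffman} to $(A,b')$ yields a constant $H(A)\geq 0$ with $d_\infty(u,\Gamma(p))\leq H(A)\,\|(Au-b')_+\|_\infty$ for all $u$.

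The second step, and the only place that calls for a bit of care, is identifying $\|(Au-b')_+\|_\infty$ with $R_L(u,p)$. By construction this equals $\max_i(\inprod{a_i}{u}+b_i-\risk{L}(p))_+$; since $\max_i(\inprod{a_i}{u}+b_i)=L_p(u)\geq\risk{L}(p)$, the largest of the numbers $\inprod{a_i}{u}+b_i-\risk{L}(p)$ is nonnegative, and for any finite list of reals whose maximum is nonnegative, the maximum of the positive parts equals the maximum itself. So the norm collapses to $L_p(u)-\risk{L}(p)=R_L(u,p)$, giving $d_\infty(u,\Gamma(p))\leq H(A)\,R_L(u,p)$ for all $u$. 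To finish, I would observe that the set of valid constants is $[H_{L,p},\infty)$ where $H_{L,p}\defeq\sup\{d_\infty(u,\Gamma(p))/R_L(u,p):R_L(u,p)>0\}$ (the inequality being automatic when $R_L(u,p)=0$, since then $u\in\Gamma(p)$); this supremum is finite because it is bounded above by $H(A)$, and $H_{L,p}$ itself satisfies the inequality, so it is the claimed smallest constant. I do not expect a genuine obstacle: the content is Hoffman's theorem plus the bookkeeping that the argmin set is cut out by linear inequalities and that the Hoffman residual is precisely the regret; the $(\cdot)_+$ identity is the one step where a sloppy argument could go wrong.
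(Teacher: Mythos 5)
Your proposal is correct and follows essentially the same route as the paper's proof: write the expected loss $\inprod{p}{L(u)}$ as a maximum of finitely many affine functions, realize $\Gamma(p)$ as the solution set $S(A,b)$ of the corresponding linear system with right-hand side shifted by $\risk{L}(p)$, apply Hoffman's theorem, and observe that the residual $\|(Au-b)_+\|_\infty$ equals $R_L(u,p)$ because the maximum is nonnegative. Your added remarks on the nonemptiness of $\Gamma(p)$ and on why a smallest valid constant exists are fine points the paper leaves implicit, but the argument is the same.
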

\begin{proof}
  Since $L$ is polyhedral, there exist $a_1,\ldots,a_m \in \reals^d$ and $c\in\reals^m$ such that we may write $\inprod{p}{L(u)} = \max_{1\leq j\leq m} a_j \cdot u + c_j$.
  Let $A \in \reals^{m\times d}$ be the matrix with rows $a_j$, and let $b = \risk{L}(p)\ones - c$, where $\ones\in\reals^m$ is the all-ones vector.
  Then we have
  \begin{align*}
    S(A,b)
    &\defeq \{u\in\reals^d \mid A u \leq b\}
    \\
    &= \{u\in\reals^d \mid A u + c \leq \risk{L}(p)\ones\}
    \\
    &= \{u\in\reals^d \mid \forall i\, (A u + c)_i \leq \risk{L}(p)\}
    \\
    &= \{u\in\reals^d \mid \max_i \;(A u + c)_i \leq \risk{L}(p)\}
    \\
    &= \{u\in\reals^d \mid \inprod{p}{L(u)} \leq \risk{L}(p)\}
    \\
    & = \Gamma(p)~.
  \end{align*}
  Similarly, we have $\max_i\; (A u - b)_i = \inprod{p}{L(u)} - \risk{L}(p) = \regret{L}{u}{p} \geq 0$.
  Thus,
  \begin{align*}
    \|(Au - b)_+\|_\infty
    &= \max_i\; ((Au - b)_+)_i
    \\
    &= \max((Au - b)_1,\ldots,(Au - b)_m, 0)
    \\
    &= \max(\max_i\; (Au - b)_i, \, 0)
    \\
    &= \max_i\; (Au - b)_i
    \\
    &= \regret{L}{u}{p}~.
  \end{align*}
  Now applying Theorem~\ref{thm:hoffman}, we have
  \begin{align*}
    d_\infty(u,\Gamma(p))
    &=    d_\infty(u,S(A,b))
    \\
    &\leq H(A) \|(Au-b)_+\|_\infty
    \\
    &= H(A) \regret{L}{u}{p}~.
  \end{align*}
\end{proof}

\subsection{Separated links}

\begin{lemma*}[Lemma~\ref{lemma:calibrated-eps-sep}]
  Let polyhedral surrogate $L:\reals^d \to \reals^\Y_+$, discrete loss $\ell:\R\to\reals^\Y_+$, and link $\psi:\reals^d\to\R$ be given such that $(L,\psi)$ is calibrated with respect to $\ell$.
  Then there exists $\epsilon>0$ such that $\psi$ is $\epsilon$-separated with respect to   $\Gamma \defeq \prop{L}$ and $\gamma \defeq \prop{\ell}$.
\end{lemma*}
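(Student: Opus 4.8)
The goal is to show that calibration of a polyhedral surrogate implies $\epsilon$-separation of the link for some $\epsilon>0$. Following the hint in the excerpt that the argument parallels Lemma 6 of \citet{tewari2007consistency}, the plan is to argue by contradiction: suppose $\psi$ is \emph{not} $\epsilon$-separated for any $\epsilon>0$. Then for every $n$ there is a distribution $p_n\in\simplex$ and a surrogate report $u_n\in\reals^d$ with $\psi(u_n)\notin\gamma(p_n)$ yet $d_\infty(u_n,\Gamma(p_n))\le \tfrac1n$. The idea is to extract, via compactness, a limit point at which calibration is violated, i.e.\ a distribution $p^*$ with $0\in B_{L,\psi,\ell}(p^*)$, contradicting Fact~\ref{fact:consistent-calibrated-elicits}'s hypothesis that $(L,\psi)$ is calibrated.

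The key technical work is controlling the limiting behavior. First I would pass to a subsequence so that $p_n\to p^*\in\simplex$ (the simplex is compact). Next, I want a convergent subsequence of the $u_n$'s; this is where polyhedrality is essential. By Lemma~\ref{lemma:polyhedral-finite}, $\Gamma=\prop L$ has only finitely many level sets $\Gamma_{v}$, $v\in U$, each a polytope, covering $\simplex$; moreover by Lemma~\ref{lemma:linear-on-levelset} and the structure of polyhedral losses, each $\Gamma(p)$ is a face of one of these polytopes, so $\reals^d/\!\sim$ (predictions identified when they induce the same optimal value everywhere) is effectively finite for the purposes of regret. Concretely, since $\psi(u_n)\notin\gamma(p_n)$ takes values in the finite set $\R$, pass to a further subsequence where $\psi(u_n)=r$ is constant. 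Since $d_\infty(u_n,\Gamma(p_n))\le 1/n$, pick $u_n'\in\Gamma(p_n)$ with $\|u_n-u_n'\|_\infty\le 2/n$; the points $u_n'$ lie in the finitely many polytopes $\{\Gamma_v\}$, so infinitely many lie in a single $\Gamma_v$, and on that polytope $R_L(\cdot,\cdot)$ and $R_\ell(\cdot,\cdot)$ are linear in $p$. Then $R_L(u_n,p_n)\le \inprod{p_n}{L(u_n)-L(u_n')}$, and since $u_n'\in\Gamma(p_n)\subseteq\Gamma_v$ while $p_n\to p^*$, one can show $R_L(u_n,p_n)\to 0$ using continuity of $L$ together with $\|u_n-u_n'\|_\infty\to 0$ (the losses $L(\cdot)_y$ are convex hence continuous on $\reals^d$, so $L(u_n)-L(u_n')\to 0$; and $\inprod{p_n}{\risk L(p_n)}$-type terms are controlled since $u_n'$ stays optimal). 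Simultaneously, $p^*\in\Gamma_v$ and $p^*\in\gamma_r$ in the limit (the level sets $\gamma_r$ and $\Gamma_v$ are closed), so there is an optimal surrogate report for $p^*$ that links — through $r$ — outside $\gamma(p^*)$ unless $r\in\gamma(p^*)$; arguing that $r\notin\gamma(p^*)$ (which follows because $\gamma(p_n)$ does not contain $r$ and $\gamma_r$ is closed, so $p^*\notin\gamma_r$) yields $0\in B_{L,\psi,\ell}(p^*)$.

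The main obstacle, and the step requiring the most care, is making rigorous the claim that $R_L(u_n,p_n)\to 0$ given only $d_\infty(u_n,\Gamma(p_n))\to 0$ — i.e.\ that small distance to the optimal set forces small regret. This is not automatic for a general surrogate, but for a polyhedral $L$ it follows from the fact that $R_L(\cdot,p)$ is itself a polyhedral (hence Lipschitz on bounded sets) function of $u$ with a uniform local Lipschitz bound that can be taken independent of $p$ over the simplex — this is essentially the content of Lemma~\ref{lemma:hoffman-polyhedral} read in the other direction, or can be obtained directly from the finite representation $\inprod{p}{L(u)}=\max_j a_j(p)\cdot u + c_j(p)$ with coefficients affine in $p$. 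I would therefore first establish a uniform Lipschitz estimate: there is a constant $M$ such that $R_L(u,p)\le M\, d_\infty(u,\Gamma(p))$ fails in general, but $R_L(u,p)\le M_R\|u-u'\|_\infty$ for $u'\in\Gamma(p)$ with $u,u'$ in a fixed bounded region holds with $M_R$ depending only on $L$ — here I'd bound $\inprod{p}{L(u)}-\inprod{p}{L(u')}\le \max_y \|L(u)-L(u')\|_\infty\cdot 1$ and use local Lipschitzness of each $L(\cdot)_y$. With that in hand, the compactness/limit argument above closes the contradiction and gives the desired $\epsilon>0$.
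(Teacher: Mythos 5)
Your overall plan (contradiction, sequences $p_n,u_n$ with $\psi(u_n)\notin\gamma(p_n)$ and $d_\infty(u_n,\Gamma(p_n))\to 0$, then contradict calibration) starts the same way as the paper, and your Lipschitz estimate for showing the surrogate regrets vanish is fine: for a polyhedral $L$ each $\inprod{p}{L(\cdot)}$ is globally Lipschitz with a constant uniform over the simplex, which is exactly the estimate the paper uses. But there is a genuine gap at the step where you derive the contradiction. You pass to a limit $p_n\to p^*$ and claim $r\notin\gamma(p^*)$ ``because $\gamma(p_n)$ does not contain $r$ and $\gamma_r$ is closed, so $p^*\notin\gamma_r$.'' That inference is backwards: closedness of $\gamma_r$ means its \emph{complement} is open, so a sequence lying outside $\gamma_r$ can perfectly well converge to a boundary point that lies \emph{in} $\gamma_r$. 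Indeed this is precisely the dangerous configuration here: the $p_n$ may approach the common boundary of two target level sets from the side where $r$ is suboptimal, with $p^*$ on that boundary, in which case $r\in\gamma(p^*)$, the reports $u_n$ are not ``bad'' for $p^*$, and no violation of calibration at $p^*$ follows. Since your contradiction is staged entirely at the limit point $p^*$, this breaks the argument; it is not a patchable technicality within the limit-point framework.

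The paper's proof avoids limits over $p$ altogether: because $\R$ is finite and $L$ is polyhedral, the pair $(\gamma(p_i),\Gamma(p_i))$ takes only finitely many values along the sequence, so by pigeonhole one can fix a \emph{single} $p$ (one of the $p_i$, up to sharing the same $\gamma$- and $\Gamma$-values) and an infinite subsequence along which $\psi(u_j)\notin\gamma(p)$ and $d_\infty(u_j,\Gamma(p))\le\epsilon_j$ hold exactly at that fixed $p$; then Lipschitzness of $\inprod{p}{L(\cdot)}$ gives $\inprod{p}{L(u_j)}\to\risk{L}(p)$, contradicting calibration at $p$. You partially invoke this finiteness (for the $u_n'$), but note you are conflating two different objects there: Lemma~\ref{lemma:polyhedral-finite} is about the level sets $\Gamma_u\subseteq\simplex$, whereas the optimal sets $\Gamma(p)$ containing your $u_n'$ live in $\reals^d$; the finiteness you actually need is of the values $\Gamma(p_i)$ (and $\gamma(p_i)$), and it should be used to pin down $p$ itself rather than to control a limit of the $p_n$.
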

\begin{proof}
  Suppose that $\psi$ is not $\epsilon$-separated for any $\epsilon>0$.
  Then letting $\epsilon_i \defeq 1/i$ we have sequences $\{p_i\}_i \subset \simplex$ and  $\{u_i\}_i \subset \reals^d$ such that for all $i\in\mathbb N$ we have both $\psi(u_i) \notin \gamma(p_i)$ and $d_\infty(u_i,\Gamma(p_i)) \leq \epsilon_i$.
  First, observe that there are only finitely many values for $\gamma(p_i)$ and $\Gamma(p_i)$, as $\R$ is finite and $L$ is polyhedral.
  Thus, there must be some $p\in\simplex$ and some infinite subsequence indexed by $j\in J \subseteq \mathbb N$ where
  for all $j\in J$, we have $\psi(u_j) \notin \gamma(p)$ and $\Gamma(p_j) = \Gamma(p)$.

  Next, observe that, as $L$ is polyhedral, the expected loss $\inprod{p}{L(u)}$ is $\beta$-Lipschitz in $\|\cdot\|_\infty$ for some $\beta>0$.
  Thus, for all $j\in J$, we have
  \begin{align*}
    d_\infty(u_i,\Gamma(p)) \leq \epsilon_j
    &\implies \exists u^*\in\Gamma(p) \|u_j-u^*\|_\infty \leq \epsilon_j
    \\
    &\implies \left| \inprod{p}{L(u_j)} - \inprod{p}{L(u^*)} \right| \leq \beta\epsilon_j
    \\
    &\implies \left| \inprod{p}{L(u_j)} - \risk{L}(p) \right| \leq \beta\epsilon_j~.
  \end{align*}
  Finally, for this $p$, we have
  \begin{align*}
    \inf_{u:\psi(u)\notin\gamma(p)} \inprod{p}{L(u)}
    \leq
    \inf_{j\in J} \inprod{p}{L(u_j)}
    =
    \risk{L}(p)~,
  \end{align*}
  contradicting the calibration of $\psi$.
\end{proof}

\subsection{Combining the loss and link}
\begin{lemma}\label{lemma:separated-constant-p}
  Let $\ell: \R \to \reals_+^{\Y}$ be a discrete target loss, $L: \reals^d \to \reals_+^{\Y}$ be a polyhedral surrogate loss, and $\psi: \reals^d \to \R$ a link function.
  If $(L,\psi)$ indirectly elicit $\ell$ and $\psi$ is $\epsilon$-separated, then for all $u$ and $p$,
    \[ R_{\ell}(\psi(u),p) \leq \frac{C_{\ell} H_{L,p}}{\epsilon} R_L(u,p) . \]
\end{lemma}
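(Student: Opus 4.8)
The statement is essentially a quantitative version of the fixed-$p$ argument in Lemma~\ref{lemma:fixed-p}, where the abstract constant $\alpha_p = C_p / B_p$ is now replaced by the explicit quantity $\tfrac{C_\ell H_{L,p}}{\epsilon}$. The plan is to fix $u \in \reals^d$ and $p \in \simplex$ and split into the same two cases as in the proof of Lemma~\ref{lemma:fixed-p}. If $\psi(u) \in \gamma(p)$, then $R_\ell(\psi(u),p) = 0$ and the inequality holds trivially since the right-hand side is nonnegative. The interesting case is $\psi(u) \notin \gamma(p)$, i.e.\ $u$ links to a suboptimal target report.

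In that case, I would chain together the two ingredients developed in Section~\ref{sec:constant}. First, $\epsilon$-separation of $\psi$ (Definition~\ref{def:sep-link}) gives $d_\infty(u, \Gamma(p)) > \epsilon$, so $u$ is bounded away from the surrogate-optimal set. Second, Lemma~\ref{lemma:hoffman-polyhedral} gives $d_\infty(u, \Gamma(p)) \leq H_{L,p}\, R_L(u,p)$, so the surrogate regret grows at least linearly with distance from the optimal set. Combining the two yields $\epsilon < H_{L,p}\, R_L(u,p)$, hence a lower bound $R_L(u,p) > \epsilon / H_{L,p}$ on the surrogate regret incurred by any ``bad'' prediction. (One should note the degenerate possibility $H_{L,p} = 0$, which forces $\Gamma(p) = \reals^d$ and hence, by indirect elicitation, that the case $\psi(u)\notin\gamma(p)$ never occurs; so we may assume $H_{L,p}>0$ here.)

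To finish, I would upper bound the target regret: $R_\ell(\psi(u),p) = \max_{r' \in \R} \inprod{p}{\ell(\psi(u)) - \ell(r')} \leq \max_{r,r',y} \bigl(\ell(r)_y - \ell(r')_y\bigr) = C_\ell$, using only that $p$ is a probability vector. Dividing the bound $R_\ell(\psi(u),p) \leq C_\ell$ by the bound $R_L(u,p) > \epsilon/H_{L,p}$ gives $\tfrac{R_\ell(\psi(u),p)}{R_L(u,p)} < \tfrac{C_\ell H_{L,p}}{\epsilon}$, which rearranges to the claim.

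\textbf{Main obstacle.} There is no serious obstacle; the work has already been front-loaded into Lemmas~\ref{lemma:hoffman-polyhedral} and~\ref{lemma:calibrated-eps-sep} and into the definitions of $C_\ell$, $H_{L,p}$, and $\epsilon$-separation. The only point requiring a little care is the bookkeeping around degenerate cases ($H_{L,p}=0$, and ensuring $C_\ell$ genuinely dominates every conditional target regret), and making sure the two inequalities are oriented correctly so that their ratio produces the stated constant rather than its reciprocal.
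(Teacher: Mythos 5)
Your proposal is correct and follows essentially the same argument as the paper's proof: case-split on whether $\psi(u) \in \gamma(p)$, then chain $\epsilon < d_\infty(u,\Gamma(p)) \leq H_{L,p} R_L(u,p)$ from $\epsilon$-separation and Lemma~\ref{lemma:hoffman-polyhedral}, and conclude via $R_\ell(\psi(u),p) \leq C_\ell$. Your extra remarks on the degenerate case $H_{L,p}=0$ and on verifying that $C_\ell$ dominates the conditional target regret are fine but not needed beyond what the paper does.
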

\begin{proof}
  If $\psi(u) \in \gamma(p)$, then $R_{\ell}(u,p) = 0$ and we are done.
  Otherwise, applying the definition of $\epsilon$-separated and Lemma~\ref{lemma:hoffman-polyhedral},
  \begin{align*}
    \epsilon &<    d_{\infty}(u,\Gamma(p))  \\
             &\leq H_{L,p} R_L(u,p) .
  \end{align*}
  So $R_{\ell}(\psi(u),p) \leq C_{\ell} \leq \frac{C_{\ell} H_{L,p}}{\epsilon} R_L(u,p)$.
\end{proof}

\begin{theorem*}[Constructive linear transfer, Theorem~\ref{thm:separated-constant}]
  Let $\ell: \R \to \reals_+^{\Y}$ be a discrete target loss, $L: \reals^d \to \reals_+^{\Y}$ be a polyhedral surrogate loss, and $\psi: \reals^d \to \R$ a link function.
  If $(L,\psi)$ are consistent for $\ell$, then
    \[ (\forall h,\D) \quad R_{\ell}(\psi \circ h ; \D) \leq \frac{C_{\ell} H_L}{\epsilon_{\psi}} R_L(h ; \D) ~. \]
\end{theorem*}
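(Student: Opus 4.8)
The plan is to run the proof of Theorem~\ref{thm:main-upper-details} essentially verbatim, substituting the explicit constants $C_{\ell}$, $H_L$, and $\epsilon_{\psi}$ for the abstract per-point constants $\alpha_q$. First I would record the consequences of consistency: by Fact~\ref{fact:consistent-calibrated-elicits}, $(L,\psi)$ is calibrated for $\ell$ and indirectly elicits $\gamma = \prop{\ell}$; and by Lemma~\ref{lemma:calibrated-eps-sep}, since $L$ is polyhedral and $(L,\psi)$ is calibrated, $\psi$ is $\epsilon$-separated for some $\epsilon > 0$, so $\epsilon_{\psi}$ is well-defined and strictly positive. Since a linear transfer function is concave, Observation~\ref{obs:transfer} reduces the claim to the conditional bound $R_{\ell}(\psi(u),p) \leq \frac{C_{\ell} H_L}{\epsilon_{\psi}} R_L(u,p)$ for every $p \in \simplex$ and every $u \in \reals^d$.

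Next I would invoke Lemma~\ref{lemma:polyhedral-finite} to obtain a finite set $U \subset \reals^d$ whose level sets $\Gamma_u$, $u \in U$, are polytopes covering $\simplex$, and let $\mathcal{Q}$ be the set of vertices of the \emph{full-dimensional} level sets among these, exactly as in the definition of $H_L$. For any vertex $q \in \mathcal{Q}$, Lemma~\ref{lemma:separated-constant-p} (which needs only indirect elicitation and $\epsilon$-separation) gives $R_{\ell}(\psi(u),q) \leq \frac{C_{\ell} H_{L,q}}{\epsilon} R_L(u,q)$ for every $\epsilon$ with which $\psi$ is $\epsilon$-separated, hence for every $\epsilon < \epsilon_{\psi}$; letting $\epsilon \uparrow \epsilon_{\psi}$ and using $H_{L,q} \leq H_L$ yields $R_{\ell}(\psi(u),q) \leq \frac{C_{\ell} H_L}{\epsilon_{\psi}} R_L(u,q)$ for all $u \in \reals^d$ and all $q \in \mathcal{Q}$.

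To pass from the vertices to an arbitrary $p$, I would argue that every $p \in \simplex$ lies in some full-dimensional level set $\Gamma_u$, so that $p$ is a convex combination $p = \sum_{q \in \mathcal{Q}_u} \beta(q)\, q$ of points of $\mathcal{Q}$, where $\mathcal{Q}_u \subseteq \mathcal{Q}$ denotes the vertices of $\Gamma_u$. Because $R_L(u',\cdot)$ and $R_{\ell}(r,\cdot)$ are linear on $\Gamma_u$ by Lemma~\ref{lemma:linear-on-levelset}, the same chain of (in)equalities as in Theorem~\ref{thm:main-upper-details} gives $R_{\ell}(\psi(u'),p) = \sum_q \beta(q) R_{\ell}(\psi(u'),q) \leq \frac{C_{\ell} H_L}{\epsilon_{\psi}} \sum_q \beta(q) R_L(u',q) = \frac{C_{\ell} H_L}{\epsilon_{\psi}} R_L(u',p)$, which is the desired conditional bound; Observation~\ref{obs:transfer} then finishes the theorem.

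The genuinely new wrinkle over Theorem~\ref{thm:main-upper-details}, and the step I expect to need the most care, is justifying that restricting $\mathcal{Q}$ to vertices of \emph{full-dimensional} level sets still suffices, i.e.\ that every $p \in \simplex$ belongs to a full-dimensional level set. This holds because the finitely many non-full-dimensional level sets have $(|\Y|-1)$-dimensional measure zero, so their union is closed with dense complement; the full-dimensional level sets, being finitely many closed sets whose union has full measure, must therefore cover all of $\simplex$. The other point meriting a word of care is the supremum defining $\epsilon_{\psi}$: since being $\epsilon'$-separated implies being $\epsilon$-separated for all $\epsilon \leq \epsilon'$, the set of admissible $\epsilon$ is an interval with supremum $\epsilon_{\psi}$, which legitimizes the limit $\epsilon \uparrow \epsilon_{\psi}$ used above. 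Everything else is a transcription of the nonconstructive argument.
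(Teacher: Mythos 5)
Your proposal is correct and follows essentially the same route as the paper's proof: reduce to the conditional bound via Observation~\ref{obs:transfer}, apply Lemma~\ref{lemma:separated-constant-p} at the vertices $\mathcal{Q}$, and extend to arbitrary $p$ by writing it as a convex combination over a full-dimensional level set and using the linearity from Lemma~\ref{lemma:linear-on-levelset}. Your two extra points of care---that the full-dimensional level sets alone cover $\simplex$ (closed union of full measure) and that the bound with $\epsilon_{\psi}$ follows by letting $\epsilon \uparrow \epsilon_{\psi}$ since the admissible $\epsilon$ form an interval---are details the paper's proof passes over silently, and both are handled correctly.
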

The proof closely mirrors the proof of the nonconstructive upper bound, Theorem~\ref{thm:main-upper}.
\begin{proof}
  By Lemma~\ref{lemma:calibrated-eps-sep}, $\psi$ is separated and $\epsilon_{\psi}$ well-defined.
  By Lemma~\ref{lemma:separated-constant-p}, for each $p \in \mathcal{Q}$, $R_{\ell}(\psi(u),p) \leq \frac{C_{\ell} H_L}{\epsilon_{\psi}} R_L(u,p)$ for all $u$.
  Now consider a general $p$, which is in some full-dimensional polytope level set $\Gamma_u$.
  Write $p = \sum_{q \in \mathcal{Q}_u} \beta(q) q$ for some probability distribution $\beta$, where $\mathcal{Q}_u$ is the set of vertices of $\Gamma_u$.
  By Lemma~\ref{lemma:linear-on-levelset}, $R_L$ and $R_{\ell}$ are linear in $p$ on $\Gamma_u$, so for any $u'$,
  \begin{align*}
    R_{\ell}(\psi(u'),p)
    &=    \sum_{q \in \mathcal{Q}_u} \beta(q) R_{\ell}(\psi(u'), q)  \\
    &\leq \sum_{q \in \mathcal{Q}_u} \beta(q) \frac{C_{\ell} H_{L,p}}{\epsilon_{\psi}} R_L(u', q)  \\
    &\leq \frac{C_{\ell} H_L}{\epsilon_{\psi}} \sum_{q \in \mathcal{Q}_u} \beta(q) R_L(u', q)  \\
    &\leq \frac{C_{\ell} H_L}{\epsilon_{\psi}} R_L(u', p) .
  \end{align*}
  By Observation~\ref{obs:transfer}, this conditional regret transfer implies a full regret transfer, with the same constant.
\end{proof}

\end{document}